\documentclass{article}

\usepackage{microtype}
\usepackage{graphicx}
\usepackage{tabularx}
\usepackage{subfigure}
\usepackage{subcaption}
\usepackage{booktabs} 
\usepackage[T1]{fontenc}
\usepackage{hyperref}

\usepackage[accepted]{icml2026}

\usepackage{amsmath}
\usepackage{amssymb}
\usepackage{mathtools}
\usepackage{amsthm}
\usepackage{multirow}
\usepackage{booktabs}

\usepackage[capitalize,noabbrev]{cleveref}

\theoremstyle{plain}
\newtheorem{theorem}{Theorem}[section]

\newtheorem{corollary}[theorem]{Corollary}
\theoremstyle{definition}

\theoremstyle{remark}

\usepackage[textsize=tiny]{todonotes}

\icmltitlerunning{Enhancing LLMs for Graph Tasks via Graph-aware LoRA Generation}

\begin{document}

\twocolumn[
  \icmltitle{Enhancing LLMs for Graph Tasks via Graph-aware LoRA Generation}



  \icmlsetsymbol{equal}{*}

  \begin{icmlauthorlist}
    \icmlauthor{Junshu Sun}{ict,ucas}
    \icmlauthor{Wanxing Chang}{ali}
    \icmlauthor{Qingming Huang}{ucas}
    \icmlauthor{Shuhui Wang}{ict}
  \end{icmlauthorlist}

  \icmlaffiliation{ict}{State Key Lab. of AI Safety, Institute of Computing Technology, Chinese Academy of Sciences, Beijing, China}
  \icmlaffiliation{ucas}{University of Chinese Academy of Sciences, Beijing, China}
  \icmlaffiliation{ali}{DAMO Academy, Alibaba Group, Hangzhou, China}

  \icmlcorrespondingauthor{Shuhui Wang}{wangshuhui@ict.ac.cn}

  \icmlkeywords{Machine Learning, ICML}

  \vskip 0.3in
]



\printAffiliationsAndNotice{}  

\begin{abstract}
Graph neural networks (GNNs) tightly couple their input-output parameters to dataset-specific feature spaces and target sets, exhibiting limited transferability across different datasets. In contrast, language models (LMs) generalize flexibly via a unified input-output interface, motivating recent attempts to adapt LMs to graph tasks. However, existing methods struggle to encode whole-graph information, leading to potential information loss and suboptimal graph understanding. In this work, we propose a novel weight-level information injection paradigm for adapting LMs to graph tasks. This paradigm injects whole-graph information by generating task-specific weight updates that interact directly with hidden representations. Instantiating this paradigm following low-rank adaptation (LoRA), we introduce GaRA, a Graph-aware LoRA generation model. GaRA constructs low-rank weight updates conditioned on the original graph structures and constrains the norm of the generated updates, thus injecting whole-graph information and avoiding the optimization bias in the weight generation. Empirical studies demonstrate that GaRA consistently outperforms baselines on zero-shot graph learning tasks.
\end{abstract}

\begin{figure}[htb]
\centering
\includegraphics[width=0.85\linewidth]{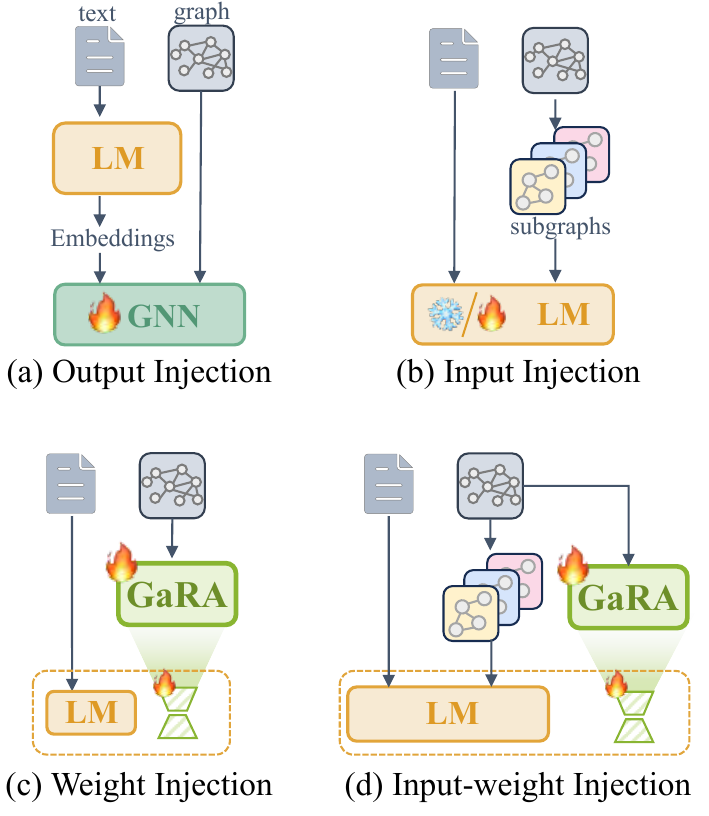}
\caption{\textbf{Comparison between Graph Injection Paradigms.} (a) Output-level injection paradigm employs GNNs as the model predictor, failing to tackle graphs with diverse labels. (b) Input-level injection paradigm transforms input graphs into subgraph sequences, overlooking higher-level information in the original graphs and leading to potential information loss. (c) Our weight-level injection paradigm enables LMs to learn from whole-graph information while maintaining the ability to tackle diverse tasks. (d) In practice, weight-level injection is complementary to the input-level injection, alleviating potential information loss.}
\label{fig:teaser}
\end{figure}

\section{Introduction}
Graph neural networks (GNNs) have emerged as effective models for supervised graph representation learning, addressing tasks such as node classification~\cite{kipf_SemiSupervisedClassificationGraph_2017,sun_RelievingAggregatingEffect_2025}, link prediction~\cite{zhang_LinkPredictionBased_2018}, and graph classification~\cite{ying_TransformersReallyPerform_2021,sun_AllinARow_2023} across diverse application scenarios~\cite{li_DeepGCNsMakingGCNs_2021, zhang_DynamicGraphMessage_2022, bessadok_GraphNeuralNetworks_2023}. However, the parameterization of GNNs is inherently tied to the number of input feature dimensions and output targets ({\it e.g.}, labels or attributes) in the training data~\cite{zhao_FullyinductiveNodeClassification_2024}. As graphs from different datasets exhibit heterogeneous feature spaces or target sets~\cite{chen_TextspaceGraphFoundation_2024}, pre-trained GNNs cannot be directly transferred to downstream graphs without modifying model parameters or architectures~\cite{zhao_FullyinductiveNodeClassification_2024}. In contrast, language models (LMs) operate on a unified textual interface, taking textual sequences as both inputs and outputs~\cite{chen_LLaGALargeLanguage_2024,li_ZeroGInvestigatingCrossdataset_2024}. By varying the length and content of input sequences, LMs naturally accommodate heterogeneous feature descriptions and produce predictions via textual generalization, enabling flexible transfer across tasks and domains. 

Motivated by the flexibility of LMs, recent studies have explored their application to graph learning~\cite{chen_LLaGALargeLanguage_2024}. A major challenge in this line of work is enabling the model to effectively capture graph topology~\cite{chen_LLaGALargeLanguage_2024}. Mainstream solutions can be broadly categorized into {\it output-level} and {\it input-level} injecting methods. Output-level injection leverages GNNs to refine LM representations and produce final predictions~\cite{liu_OneAllTraining_2023}. However, similar to traditional GNNs, these methods remain tightly coupled to task-specific target sets and thus exhibit limited transferability across different downstream tasks. Differently, input-level injection incorporates structural information directly into LM inputs by transforming graph structures into prompt sequences, and subsequently fine-tunes pre-trained LMs to adapt to these prompts~\cite{chen_LLaGALargeLanguage_2024,wang_UniGTEUnifiedGraph_2025}. Nevertheless, due to the fully connected self-attention mechanism, LMs incur $O(n^2)$ computational complexity for an input sequence of length $n$. As a result, graph-based prompts are typically restricted to small subgraphs, discarding the remaining structural context. This limitation overlooks higher-level information encoded in the original graph and potentially leads to information loss and biased graph understanding.

To alleviate information loss, direct solutions include expanding the subgraph size or concatenating the global graph feature to the input sequence, yet increasing the computational overhead quadratically. This suggests that simply augmenting information volume at the input level is not a viable solution. In this paper, we propose a novel {\it weight-level} injection paradigm for adapting LMs to graph tasks. Specifically, the injection is realized by generating the weight matrix with the whole-graph information and applying the resulting weights to the representations in the hidden layers. Our weight-level injection complements the subgraph information injected at the input level, alleviating potential information loss while avoiding a quadratic increase in computational cost.

We then implement this weight-level paradigm with low-rank adaptation (LoRA), proposing the \textbf{G}raph-\textbf{a}ware L\textbf{oRA} generation method, named GaRA, including a LoRA-style weight generation module and a norm-rescaling module. The weight generation module generates a low-rank matrix to construct weight updates, injecting whole-graph information into the LM backbone. The norm-rescaling module is proposed to tackle the direction-preserving norm amplification problem, which we prove as an architectural optimization bias during fine-tuning. In the empirical analysis, we show that GaRA can consistently surpass baseline methods on zero-shot transfer tasks. Our contributions can be summarized as follows:
\begin{itemize}
    \item We design a novel weight-level information-injecting paradigm, which preserves the whole-graph information for LMs on graph tasks.
    \item We propose a graph-aware weight generation method, GaRA, under the weight-level paradigm, which alleviates potential information loss and tackles the architectural optimization bias problem.
    \item We demonstrate the advantages of GaRA on zero-shot graph learning tasks, which surpass baselines on different transfer tasks.
\end{itemize}

\section{Related Work}
LMs have demonstrated predominant generalizability across a wide range of tasks~\cite{openai_GPT4oSystemCard_2024}. In light of this, substantial efforts have been devoted to adapting LMs to graph tasks~\cite{chen_LLaGALargeLanguage_2024}. However, Wang et al., \yrcite{wang_CanLanguageModels_2023} suggest that pure LMs struggle to capture structural relations in graphs. To address this limitation, recent studies incorporate graph structures into LMs~\cite{chen_LLaGALargeLanguage_2024}, enabling more effective adaptation to graph tasks. Existing work can be broadly categorized into output-level and input-level injecting methods.

\subsection{Output-level Injection}
In these methods, LMs typically serve as text encoders, while GNNs further enhance the representations and make final predictions~\cite{liu_OneAllTraining_2023,chen_TextspaceGraphFoundation_2024}. Owing to the strong textual understanding capabilities, LMs facilitate semantic alignment across diverse real-world graphs, thereby reducing domain gaps in applications such as biochemical graphs~\cite{morris_TUDatasetCollectionBenchmark_2020}, social networks~\cite{lim_LargeScaleLearning_2021}, and e-commerce graphs~\cite{hu_OpenGraphBenchmark_2020}. Given the LM-encoded textural features, GNNs further inject structural features through message passing, ensuring graph-awareness of the final predictions. Nevertheless, the GNN-based predictors fail to tackle graphs with diverse labels~\cite{zhao_FullyinductiveNodeClassification_2024}, leading to limited adaptability of the output-level injection. Due to this limitation, we follow the input-level injecting methods to construct our framework in this paper.

\subsection{Input-level Injection}
In contrast to output-level methods, input-level injection adopts LMs as predictors. Structural information is directly injected into the LM inputs by concatenating graph prompts, thereby enabling graph-aware learning. Typically, the graph prompts can be derived from GNN outputs~\cite{wang_LLMsZeroshotGraph_2024a,zhang_GraphTranslatorAligningGraph_2024a,lv_GraphPrompterMultiStageAdaptive_2025,he_UniGraphLearningUnified_2025} or graph compression strategies~\cite{chen_LLaGALargeLanguage_2024,wang_UniGTEUnifiedGraph_2025}. 

\paragraph{GNN-based Prompting.}
GNN-based prompting strategy encodes graph structures into prompt embeddings with various GNN models~\cite{kipf_SemiSupervisedClassificationGraph_2017,hamilton_InductiveRepresentationLearning_2017,sun_DynamicMessagePassing_2024}. However, to ensure computational efficiency and training stability, these methods commonly freeze the GNN during LM tuning~\cite{wang_LLMsZeroshotGraph_2024a,li_ZeroGInvestigatingCrossdataset_2024}. In consequence, the resulting graph-dependent signals cannot be optimized for downstream objectives, which may lead to suboptimal representations and exacerbate distribution shifts between pre-training and downstream data~\cite{wang_UniGTEUnifiedGraph_2025}. To address this problem, GOFA~\cite{kong_GOFAGenerativeOneAll_2024} interleaves GNN layers into a frozen pre-trained LLM, adjusting the prompt signal accordingly but also incurring intensive computational requirement.

\paragraph{Graph Compression Prompting.}
Unlike GNN-based prompting, graph compression strategies leverage the relational modeling capacity of pre-trained LMs, such as large language models (LLMs), to learn prompts from node sequences. A key challenge in this paradigm is transforming input graphs into sequences while preserving structural information. To this end, LLaGA~\cite{chen_LLaGALargeLanguage_2024} and ZeroG~\cite{li_ZeroGInvestigatingCrossdataset_2024} generate node sequences via neighborhood sampling for each node \footnote{ZeroG also applies message passing after the LM. Since this process introduces no learnable parameters, we categorize ZeroG as an input-level injecting method.}. However, the sampling strategy is permutation-variant, making the compressed representations sensitive to the ordering of the sampled nodes. In contrast, UniGTE~\yrcite{wang_UniGTEUnifiedGraph_2025} restores permutation invariance by fixing relative positional encodings between nodes to zero and injecting graph structure via attention biases.

Beyond the characteristics discussed above, due to constraints on model complexity and the goal of task unification, both GNN-based methods and graph compression strategies reduce all tasks to the graph level by extracting an $n$-hop subgraph centered on the target node or edge. This process overlooks higher-level information encoded in the original graph, potentially leading to information loss in downstream tasks. Different from these methods that solely operate at the input level, we propose a weight-level paradigm, injecting whole-graph information via weight generation.

\subsection{Weight-level Injection}
Weight-level injection has emerged as a promising paradigm for adapting LMs to the visual data. For example, VLoRA~\cite{ma_VisualPerceptionLarge_2024} extracts informative features via learnable queries and transforms them into low-rank weight updates. Adapting this paradigm to graph tasks involves two challenges. First, graph datasets typically contain much less training data than vision datasets, making weight generation more prone to optimization bias during fine-tuning. Second, graph data encodes rich relational structures and multi-scale information, making it challenging to inject sufficient features via a small number of generated weight updates. To address these challenges, GaRA employs norm rescaling in weight injection to avoid optimization bias. The global features captured by weight injection are combined with the local features from input injection, thereby enabling more sufficient feature injection.

\section{Preliminaries}
\subsection{Notations}
Given an input graph $\mathcal{G}=(\mathcal{V}, \mathcal{E})$, where $\mathcal{V}=\{v_1, \dots, v_n\}$ denotes the set of nodes with $|\mathcal{V}|=n$, and $\mathcal{E}=\{e_{i,j}\mid v_j \in \mathcal{N}(v_i)\}$ denotes the set of edges with $|\mathcal{E}|=m$. For a node $v\in \mathcal{V}$, let $\mathcal{N}(v)$ denote its one-hop neighborhood. Each node $v$ is associated with a feature vector $\mathbf{x}_v \in \mathbb{R}^{d_\mathtt{in}}$ and a textual description $t_v$, where $d_\mathtt{in}$ denotes the feature dimension. Stacking all node features yields the node feature matrix $\mathbf{X} = (\mathbf{x}_{v_1}, \dots, \mathbf{x}_{v_n})^\top \in \mathbb{R}^{n\times d_\mathtt{in}}$. Traditional GNNs are sensitive to the specific feature dimension $d_\mathtt{in}$ of different graphs. To tackle the heterogeneous feature spaces, pre-trained LMs~\cite{devlin_BERTPretrainingDeep_2019,reimers-2019-sentence-bert} are adopted to embed textual descriptions into $\mathbf{X}$, unifying input features in the embedding space. Graph structures can be represented by the adjacency matrix $\mathbf{A} \in \mathbb{R}^{n\times n}$, where $\mathbf{A}_{i,j} = 1$ if $v_j\in \mathcal{N}(v_i)$ and $0$ otherwise. 

\subsection{Model Fine-tuning}
During the adaptation of the LMs, fine-tuning strategies are adopted for pre-trained models. Without loss of generality, given a weight matrix $\hat{\mathbf{W}}\in\mathbb{R}^{p\times q}$ for a linear transformation $\mathbb{R}^{p}\mapsto\mathbb{R}^{q}$, the general adaptation process can be formulated as an additive modification to the original weights
\begin{equation}\label{eq:general}
    \texttt{(General)}\quad \mathbf{W} = \hat{\mathbf{W}} + \Delta\mathbf{W},
\end{equation}
where $\Delta\mathbf{W}$ denotes the weight update. Among different strategies, low-rank adaptation (LoRA) achieves pronounced efficiency and tuning effectiveness~\cite{hu_LoRALowRankAdaptation_2021}. Given a weight matrix $\hat{\mathbf{W}}\in\mathbb{R}^{p\times q}$, LoRA introduces two learnable low-rank matrices $\mathbf{L}\in\mathbb{R}^{p\times r}, \mathbf{R}\in\mathbb{R}^{q\times r}$ to enable parameter-efficient fine-tuning. The weight update can be reparameterized as
\begin{equation}\label{eq:lora}
    \texttt{(LoRA)}\quad \mathbf{W} = \hat{\mathbf{W}} + \Delta \mathbf{W} = \hat{\mathbf{W}} + \mathbf{LR}^\top
\end{equation}
When $r\ll \min(p,q)$, LoRA significantly reduces memory consumption and improves training efficiency by optimizing only the low-rank matrices while keeping $\hat{\mathbf{W}}$ fixed.

\section{Weight-level Graph Injection}
Despite improved efficiency and enhanced adaptability, input-level injection strategies discard higher-level information in the whole-graph context, potentially resulting in information loss and suboptimal adaptation of LMs to graph tasks. To alleviate information loss, we propose a novel weight-level paradigm, injecting whole-graph information via weight generation and complementary to the input-level subgraph information. 

\subsection{Task Description}
To achieve task unification, both node and edge tasks are reformulated by extracting task-specific subgraphs as graph-level inputs~\cite{tang_GraphGPTGraphInstruction_2024a}. Each task is then represented by a tuple $(\mathcal{G},\mathcal{G}_s,t_\texttt{task},t_\texttt{label})$, where $\mathcal{G}_s\subseteq\mathcal{G}$ is the subgraph associated with the task, $t_\texttt{task}$ is a textual task description specifying the prediction objective, and $t_\texttt{label}$ is the corresponding textual label. Graph-level tasks are treated as a special case where $\mathcal{G}_s=\mathcal{G}$. Under this formulation, an LM generates the label description $\hat{t}_\texttt{label}$ conditioned on $\mathcal{G}$, $\mathcal{G}_s$, and $t_\texttt{task}$.

Motivated by the ability to bridge the gap between pre-training and downstream data, we adopt the graph compression strategy for our injection framework. It fine-tunes an LLM encoder that compresses the input sequence $t_\texttt{enc}=[t_{\mathcal{G}_s}\|t_\texttt{task}]$ into a continuous graph prompt, where $\|$ denotes concatenation, $t_{\mathcal{G}_s}=\{t_v|v\in\mathcal{G}_s\}$ denotes the subgraph node sequence. A frozen LLM decoder then takes the learned prompt and the task description $t_\texttt{task}$ as input for label generation.

\begin{figure*}[htb]
\centering
\includegraphics[width=0.9\textwidth]{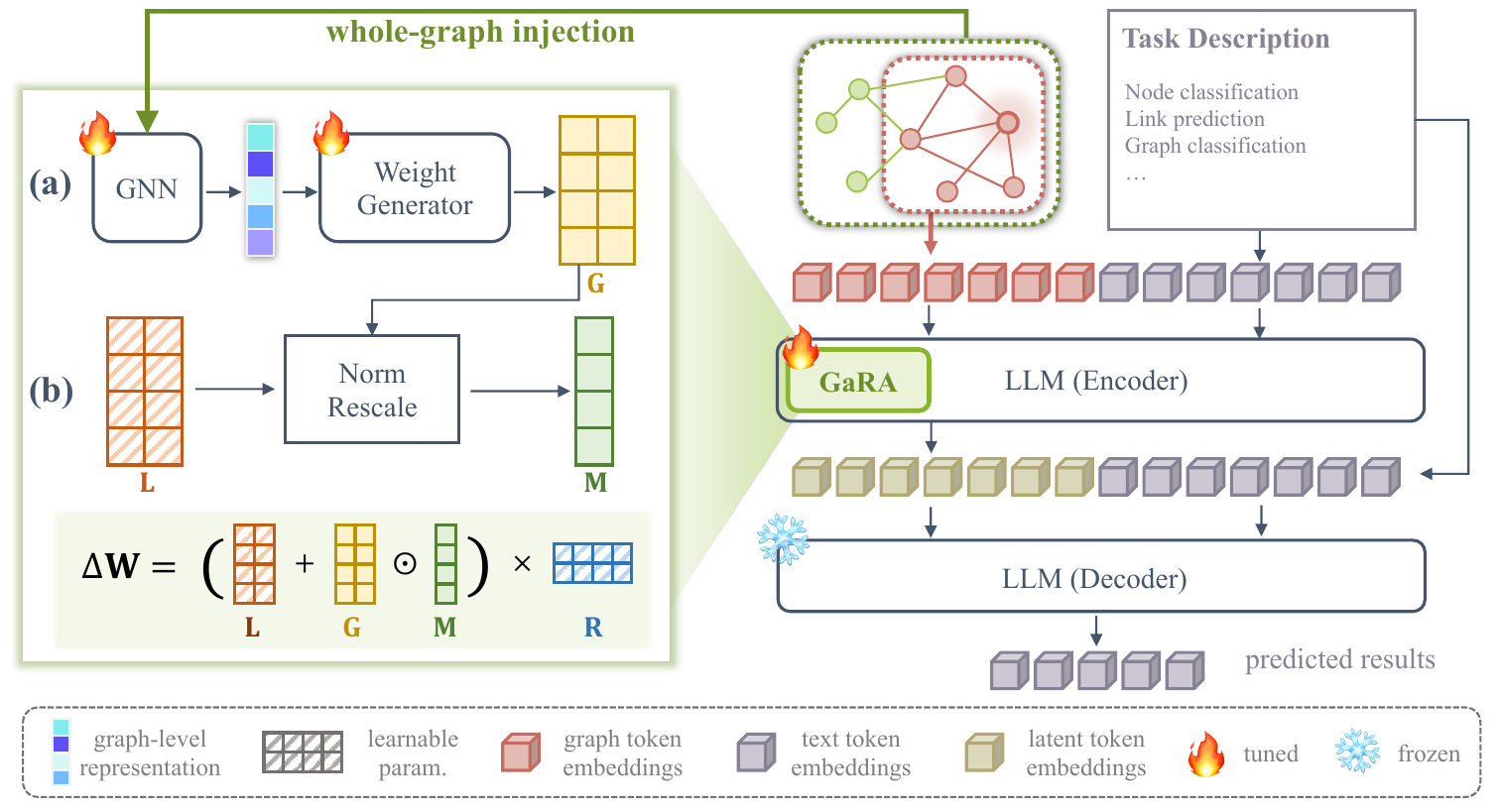}
\caption{\textbf{Pipeline of the Graph-aware LoRA Generation model, GaRA.} 
(a) To inject whole-graph information into LLMs, GaRA first applies GNN to the whole input graph, extracting a global representation via max pooling. This representation is then fed into an MLP-based weight generator to obtain low-rank weight updates $\mathbf{G}$. (b) To tackle the norm-amplification optimization bias, the norm of $\mathbf{G}$ is further rescaled based on the left low-rank matrix $\mathbf{L}$ from LoRA. The norm-rescaled $\mathbf{G}$ is combined with the low-rank matrices in LoRA to construct the final weight updates.}
\label{fig:pipeline}
\end{figure*}
\subsection{Injecting Graph Structures via Weight Generation}
\paragraph{General Formulation.}
Given the input feature matrix $\mathbf{X}$ and the adjacency matrix $\mathbf{A}$, we employ a GNN model to capture graph information in the whole-graph context
\begin{equation}\label{eq:gnn}
    \mathbf{h}=\texttt{MaxPool}(\texttt{GNN}(\mathbf{X},\mathbf{A})),
\end{equation}
where $\mathbf{h}\in\mathbb{R}^{d}$ denotes the resulting graph-level representation. In Eq.~\ref{eq:gnn}, the GNN learns node-level features via message passing on the input graph, encoding the graph topology implicitly. The subsequent $\texttt{MaxPool}$ operation lifts node features into a graph-level representation, providing whole-graph information for graph-aware weight generation.

Based on the graph-level representation $\mathbf{h}$, a non-linear mapping $\texttt{f}:\mathbb{R}^{d}\rightarrow\mathbb{R}^{p\times q}$ is adopted to generate the weight update, reformulating Eq.~\ref{eq:general} as
\begin{equation}\label{eq:gara-general}
    \mathbf{W} = \hat{\mathbf{W}} + \Delta \mathbf{W} = \hat{\mathbf{W}} + \texttt{f}(\mathbf{h}; \mathbf{\Theta}),
\end{equation}
where $\mathbf{\Theta}$ denotes the parameter matrix. To ensure the generated matrix covering both positive and negative values symmetrically, $\texttt{tanh}$ is employed in $\texttt{f}$.

\paragraph{Optimization Bias.} Graph-aware weight generation injects whole-graph information into the adaptation process without incurring a quadratic increase in computational cost. However, the primary formulation in Eq.~\ref{eq:gara-general} introduces an architectural bias during fine-tuning, under which the weight update is prone to norm amplification. Consequently, whenever increasing the norm of $\Delta\mathbf{W}$ decreases the training loss, gradient descent will try to amplify $\|\Delta \mathbf{W}\|$. This results in an architectural optimization bias toward norm-scaling weight update.

Formally, we show that for an adaptation strategy defined in Eq.~\ref{eq:gara-general}, there always exists a parameter update such that $\Delta\mathbf{W}$ can be scaled with its direction unchanged.
\begin{theorem}[Direction-preserving Update]\label{thrm:exist}
Let $\Delta\mathbf{W}=\texttt{f}(\mathbf{h}; \mathbf{\Theta})$. For a fixed input $\mathbf{h}$ and the Jacobian $J_\texttt{f}(\mathbf{\Theta})=\partial\texttt{f}(\mathbf{h}; \mathbf{\Theta})/\partial\mathbf{\Theta}$, there exists a nonzero parameter direction $\delta\mathbf{\Theta}$ such that under the update $\mathbf{\Theta}\leftarrow\mathbf{\Theta}+\epsilon\delta\mathbf{\Theta}$, the weight update satisfies $\Delta\mathbf{W}\leftarrow(1+\epsilon)\Delta\mathbf{W}+O(\epsilon)$.
\end{theorem}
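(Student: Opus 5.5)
The plan is to prove the stronger first-order statement $\Delta\mathbf{W}(\mathbf{\Theta}+\epsilon\delta\mathbf{\Theta})=(1+\epsilon)\Delta\mathbf{W}+O(\epsilon^2)$. I read the $O(\epsilon)$ in Theorem~\ref{thrm:exist} as this second-order remainder. Taken literally, an $O(\epsilon)$ error would make the claim hold for every direction, and the theorem would say nothing.

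By Taylor's theorem, $\texttt{f}(\mathbf{h};\mathbf{\Theta}+\epsilon\delta\mathbf{\Theta})=\texttt{f}(\mathbf{h};\mathbf{\Theta})+\epsilon J_\texttt{f}(\mathbf{\Theta})\delta\mathbf{\Theta}+O(\epsilon^2)$, since $\texttt{f}$ is smooth in $\mathbf{\Theta}$ (it is built from an MLP with $\texttt{tanh}$). The task therefore reduces to a linear-algebra fact: find a nonzero $\delta\mathbf{\Theta}$ with $J_\texttt{f}(\mathbf{\Theta})\delta\mathbf{\Theta}=\Delta\mathbf{W}$. In other words, $\Delta\mathbf{W}$ must lie in the range of the Jacobian. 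If $\Delta\mathbf{W}=0$ the claim is trivial: take $\delta\mathbf{\Theta}$ to be any nonzero vector in $\ker J_\texttt{f}$, or note that $(1+\epsilon)\cdot 0=0$ up to $O(\epsilon)$. So I assume $\Delta\mathbf{W}\neq 0$.

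To exhibit such a $\delta\mathbf{\Theta}$, I use the structure of $\texttt{f}$. I write its last stage as $\texttt{f}(\mathbf{h};\mathbf{\Theta})=\texttt{tanh}(\mathbf{U})$, reshaped to $p\times q$, with preactivation $\mathbf{U}=\mathbf{\Theta}_\mathrm{o}\mathbf{z}+\mathbf{b}$. Here $\mathbf{z}=\mathbf{z}(\mathbf{h};\mathbf{\Theta}_\mathrm{rest})$ is the penultimate feature and $\mathbf{\Theta}_\mathrm{o},\mathbf{b}$ are the output-layer weight and bias. I perturb only these output-layer parameters and leave $\mathbf{\Theta}_\mathrm{rest}$ fixed. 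Since $\texttt{tanh}'(u)=1-\texttt{tanh}^2(u)$, the chain rule gives $J\delta\mathbf{\Theta}=(\mathbf{1}-\Delta\mathbf{W}^{\odot 2})\odot\delta\mathbf{U}$. I then choose the target preactivation change entrywise as
\begin{equation*}
\delta\mathbf{U}=\Delta\mathbf{W}\oslash(\mathbf{1}-\Delta\mathbf{W}^{\odot 2}).
\end{equation*}
This is well defined because every entry of $\Delta\mathbf{W}=\texttt{tanh}(\mathbf{U})$ has absolute value strictly below $1$. It remains to realize $\delta\mathbf{U}$ as $\delta\mathbf{\Theta}_\mathrm{o}\mathbf{z}+\delta\mathbf{b}$, which I do as follows:
\begin{itemize}
\item if a bias is present, take $\delta\mathbf{b}=\delta\mathbf{U}$ and $\delta\mathbf{\Theta}_\mathrm{o}=0$;
\item otherwise, provided $\mathbf{z}\neq 0$, take $\delta\mathbf{\Theta}_\mathrm{o}=\delta\mathbf{U}\mathbf{z}^\top/\|\mathbf{z}\|^2$.
\end{itemize}
Either choice is nonzero because $\delta\mathbf{U}\neq 0$ exactly when $\Delta\mathbf{W}\neq 0$. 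Substituting into the Taylor expansion gives the claimed $(1+\epsilon)\Delta\mathbf{W}+O(\epsilon^2)$.

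I expect the main obstacle to be the nondegeneracy of this last step, namely ensuring that the output map can actually move $\mathbf{U}$ in the direction $\delta\mathbf{U}$. Without a bias, this requires $\mathbf{z}\neq 0$. But if $\mathbf{z}=0$, then $\mathbf{U}=0$ and hence $\Delta\mathbf{W}=0$, which is the trivial case already handled. So I would state the argument for the architecture with the output linear layer (with or without bias) and handle this degenerate case separately. If $\texttt{f}$ instead ends with a plain linear layer and no $\texttt{tanh}$, the argument simplifies: $\delta\mathbf{\Theta}_\mathrm{o}=\mathbf{\Theta}_\mathrm{o}$ (and $\delta\mathbf{b}=\mathbf{b}$) gives an exact scaling with no remainder.
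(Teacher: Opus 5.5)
Your argument is correct, and your reading of the remainder as $O(\epsilon^2)$ matches the appendix restatement of the theorem. You share the paper's reduction: a first-order Taylor expansion turns the claim into solvability of $J_{\texttt{f}}(\mathbf{\Theta})\delta\mathbf{\Theta}=\texttt{f}(\mathbf{h};\mathbf{\Theta})$. The two proofs differ in how they show this system is solvable. The paper notes that a single-layer perceptron has at least $pq$ parameters and concludes that $J_{\texttt{f}}$ has full row rank. A parameter count is only a necessary condition for full row rank, so that step is asserted rather than proved. You instead construct a solution by moving only the output-layer parameters. You use $1-\texttt{tanh}^2(u)>0$ to invert the activation's derivative entrywise, then realize $\delta\mathbf{U}$ through the bias or the rank-one update $\delta\mathbf{U}\mathbf{z}^\top/\|\mathbf{z}\|^2$. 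This proves that the output-layer block of the Jacobian is surjective whenever a bias is present or $\mathbf{z}\neq 0$, which is exactly the fact the paper's counting argument needs. Your route is rigorous and identifies the structural property of $\texttt{f}$ actually used: a surjective final affine layer followed by an activation with nonvanishing derivative. It also guarantees $\delta\mathbf{\Theta}\neq 0$. It needs smoothness only in the output-layer parameters, so it covers deeper generators with arbitrary hidden layers. Finally, it treats the case $\Delta\mathbf{W}=0$, which the paper ignores. The paper's version is shorter and stated for a generic $\texttt{f}$, but only by relying on the unproved rank claim. One small tidy-up: when $\Delta\mathbf{W}=0$, justify that $\ker J_{\texttt{f}}$ is nontrivial rather than falling back on the literal $O(\epsilon)$ reading you had rejected. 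With a bias, $(\delta\mathbf{\Theta}_{\mathrm{o}},\delta\mathbf{b})=(\mathbf{E},-\mathbf{E}\mathbf{z})$ works for any nonzero $\mathbf{E}$.
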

Proof is provided in the Appendix~\ref{sec:app-proof}. Compared to the weight generation, the scaling of $\Delta\mathbf{W}$ in traditional fine-tuning methods arises solely from the scaling of the parameter matrices, which are discouraged by regularization techniques such as weight decay. However, when $\Delta\mathbf{W}$ is generated by the parameteric function $\texttt{f}(\mathbf{h}; \mathbf{\Theta})$, weight decay applied to $\mathbf{\Theta}$ does not directly constrain the norm of $\Delta\mathbf{W}$. Therefore, the direction-preserving norm amplification still emerges even under regularization. 

\subsection{GaRA: Graph-aware LoRA Generation}\label{ssec:gara}
\paragraph{LoRA-style Generation.}
Motivated by the efficiency and effectiveness of LoRA for model fine-tuning, we employ the low-rank strategy in our weight generation process (Fig.~\ref{fig:pipeline}). Specifically, take the left matrix generation as an example, the weight update in Eq.~\ref{eq:lora} can now be reformulated as
\begin{equation}\label{eq:gara}
    \mathbf{W} = \hat{\mathbf{W}} + (\mathbf{L} + \texttt{g}(\mathbf{h}; \mathbf{\Theta}))\mathbf{R}^\top,
\end{equation}
where $\texttt{g}:\mathbb{R}^{d}\rightarrow\mathbb{R}^{d_\texttt{M}\times r}$ generates the low-rank matrices $\mathbf{G}=\texttt{g}(\mathbf{h}; \mathbf{\Theta})$ for LoRA, $d_\texttt{M}=p$ when $\mathbf{G}$ serves as an augmented left low-rank matrix, and $d_\texttt{M}=q$ when it serves as an augmented right matrix. In practice, we combine linear mapping with $\texttt{tanh}$ to implement $\texttt{g}$. During fine-tuning, both $\mathbf{\Theta}$ and the low-rank matrices $\mathbf{L}$/$\mathbf{R}$ are trainable. Empirically, augmenting the left low-rank matrix generally achieves better adaptation results. Please refer to Sec.~\ref{sssec:abl-target} for more details. Therefore, in the following context, we stick to left matrix generation to formulate specific designs in GaRA.

The direction-preserving norm amplification problem also exists in the LoRA framework, as the results in Theorem~\ref{thrm:exist} directly extend to the low-rank setting. Proof is presented in the Appendix~\ref{sec:app-proof}.
\begin{corollary}[Direction-preserving Update in LoRA-style Generation]\label{thrm:exist-lora}
Let $\Delta\mathbf{W}=\mathbf{LR}^\top+\texttt{g}(\mathbf{h}; \mathbf{\Theta})\mathbf{R}^\top$. For a fixed input $\mathbf{h}$ and the Jacobian $J_\texttt{g}(\mathbf{\Theta})=\partial\texttt{g}(\mathbf{h}; \mathbf{\Theta})/\partial\mathbf{\Theta}$, there exists a nonzero parameter direction $\delta\mathbf{\Theta}$ such that under the update $\mathbf{\Theta}\leftarrow\mathbf{\Theta}+\epsilon\delta\mathbf{\Theta}$, the weight update satisfies $\Delta\mathbf{W}\leftarrow(1+\epsilon)\Delta\mathbf{W}+O(\epsilon)$.
\end{corollary}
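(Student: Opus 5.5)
The plan is to reduce the corollary to Theorem~\ref{thrm:exist} by regarding the whole low-rank update as one parametric generator of the form in Eq.~\ref{eq:gara-general}. The statement as written has a loose remainder: the claim $\Delta\mathbf{W}\leftarrow(1+\epsilon)\Delta\mathbf{W}+O(\epsilon)$ only controls first order, so the remainder is really $o(\epsilon)$. I will aim to produce a direction $\delta\mathbf{\Theta}$ whose first-order effect on $\Delta\mathbf{W}$ is $\epsilon\,\mathbf{G}\mathbf{R}^\top$ with $\mathbf{G}=\texttt{g}(\mathbf{h};\mathbf{\Theta})$. I will then make this direction-preserving in the sense of the corollary.

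\textbf{Step 1 (reduction).} Fix $\mathbf{h}$, $\mathbf{L}$ and $\mathbf{R}$. The only parameter being perturbed is $\mathbf{\Theta}$, so
\[
\Delta\mathbf{W}(\mathbf{\Theta}+\epsilon\delta\mathbf{\Theta}) = \mathbf{L}\mathbf{R}^\top + \mathbf{G}\mathbf{R}^\top + \epsilon\,\bigl(J_\texttt{g}(\mathbf{\Theta})\,\delta\mathbf{\Theta}\bigr)\mathbf{R}^\top + o(\epsilon).
\]
This follows by Taylor expansion of $\texttt{g}$, which is differentiable because it is a linear map followed by $\texttt{tanh}$. Right multiplication by the fixed $\mathbf{R}^\top$ is linear, so it commutes with the expansion. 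The map $\mathbf{\Theta}\mapsto \texttt{g}(\mathbf{h};\mathbf{\Theta})\mathbf{R}^\top$ is therefore itself a generator $\texttt{f}$ as in Theorem~\ref{thrm:exist}.

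\textbf{Step 2 (invoke the theorem's construction on $\texttt{g}$).} I apply the argument of Theorem~\ref{thrm:exist} to $\texttt{g}$ rather than to $\texttt{f}$. The aim is a nonzero $\delta\mathbf{\Theta}$ with $J_\texttt{g}(\mathbf{\Theta})\,\delta\mathbf{\Theta}=\mathbf{G}$, that is, $\mathbf{G}$ lies in the range of the Jacobian. For the concrete $\texttt{g}=\texttt{tanh}(\mathbf{\Theta}\mathbf{h})$ (reshaped), this can be done explicitly. The Jacobian acts as $\delta\mathbf{\Theta}\mapsto \texttt{tanh}'(\mathbf{\Theta}\mathbf{h})\odot(\delta\mathbf{\Theta}\,\mathbf{h})$. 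Choose
\[
\delta\mathbf{\Theta}=\bigl(\mathbf{G}\oslash \texttt{tanh}'(\mathbf{\Theta}\mathbf{h})\bigr)\,\mathbf{h}^\top/\|\mathbf{h}\|^2 .
\]
This is valid since $\texttt{tanh}'>0$ everywhere, and it requires $\mathbf{h}\neq 0$. Here $\oslash$ is entrywise division, used only as notation in this plan. Plugging into Step 1 gives $\Delta\mathbf{W}\leftarrow \mathbf{L}\mathbf{R}^\top+(1+\epsilon)\mathbf{G}\mathbf{R}^\top+o(\epsilon)$. So the generated component $\mathbf{G}\mathbf{R}^\top$ is scaled by $(1+\epsilon)$ with its direction unchanged.

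\textbf{Step 3 (matching the stated form).} The corollary states the scaling for the full $\Delta\mathbf{W}$, including $\mathbf{L}\mathbf{R}^\top$. To recover exactly $(1+\epsilon)\Delta\mathbf{W}$, I instead require $J_\texttt{g}\,\delta\mathbf{\Theta}=\mathbf{L}+\mathbf{G}$. The same surjectivity construction with target $\mathbf{L}+\mathbf{G}$ gives this, and then $(J_\texttt{g}\delta\mathbf{\Theta})\mathbf{R}^\top=\Delta\mathbf{W}$ exactly. The direction is nonzero whenever $\Delta\mathbf{W}\neq 0$; if $\Delta\mathbf{W}=0$ the claim is trivial with any nonzero direction. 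Either way the resulting first-order change is $\epsilon\Delta\mathbf{W}$, which is the claimed direction-preserving update.

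\textbf{Main obstacle.} The real content is the range condition, namely that the Jacobian of the generator can hit the required target matrix. For a generic $\texttt{g}$ this is an assumption that the Jacobian is surjective onto the relevant matrix, presumably the same one used in Theorem~\ref{thrm:exist}. For the paper's linear-plus-$\texttt{tanh}$ instantiation, I would verify it directly as in Step 2, using the nonvanishing of $\texttt{tanh}'$ and $\mathbf{h}\neq 0$.
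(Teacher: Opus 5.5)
Your proof is correct and follows the same reduction as the paper's. You expand $\texttt{g}$ to first order and use that $\mathbf{L}$ and $\mathbf{R}$ do not depend on $\mathbf{\Theta}$, so the Jacobian of $\texttt{f}=\mathbf{L}+\texttt{g}$ is just $J_\texttt{g}(\mathbf{\Theta})$. The claim then reduces to solvability of a linear system $J_\texttt{g}(\mathbf{\Theta})\,\delta\mathbf{\Theta}=$ target.

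You differ from the paper in two places. First, the paper does not verify the range condition. It inherits from Theorem~\ref{thrm:exist} the assertion that the Jacobian is full row rank, justified there by a parameter count for a single-layer perceptron. A count alone does not imply full row rank; it fails, e.g., at $\mathbf{h}=0$ without a bias. Your explicit right inverse divides the target entrywise by $\texttt{tanh}'(\mathbf{\Theta}\mathbf{h})$ and takes the outer product with $\mathbf{h}/\|\mathbf{h}\|^2$. This actually proves surjectivity for the linear-plus-$\texttt{tanh}$ generator and exposes the needed hypothesis ($\mathbf{h}\neq 0$, or a bias term). The paper's version is shorter and phrased for a generic generator, but only by assuming what you prove.

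Second, the paper's displayed system, Eq.~\ref{eq:gara-linear}, targets $\texttt{g}(\mathbf{h};\mathbf{\Theta})$ alone. Read literally, this rescales only $\mathbf{G}\mathbf{R}^\top$ and leaves a first-order discrepancy $-\epsilon\mathbf{L}\mathbf{R}^\top$, hidden only by the loose $O(\epsilon)$ remainder. Your Step~3 target $\mathbf{L}+\mathbf{G}$ is what the paper's identification $\texttt{f}=\mathbf{L}+\texttt{g}$ really calls for. It gives $(1+\epsilon)\Delta\mathbf{W}$ up to $O(\epsilon^2)$, matching the appendix form of Theorem~\ref{thrm:exist}.

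One small slip: under your own $o(\epsilon)$ reading, the degenerate case is not ``trivial with any nonzero direction.'' Split on $\mathbf{L}+\mathbf{G}$ instead. Your construction already gives $\delta\mathbf{\Theta}\neq 0$ whenever $\mathbf{L}+\mathbf{G}\neq 0$. If $\mathbf{L}+\mathbf{G}=0$, take a nonzero $\delta\mathbf{\Theta}$ with $\delta\mathbf{\Theta}\,\mathbf{h}=0$ (available when $d\ge 2$), which leaves $\Delta\mathbf{W}=0$ exactly unchanged.
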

These theoretical results can be verified in the empirical evaluation. During the training process, we observe that while LoRA exhibits a stable training dynamics, the primary LoRA-style generation method leads to continuous norm amplification on the generated low-rank matrix. Please refer to Sec.~\ref{ssec:norm} for more details. 

\paragraph{Influence of Norm Amplification.} Norm amplification creates an optimization shortcut during fine-tuning, which potentially leads to overfitting. Specifically, traditional LoRA uses weight decay to regularize the norm of weight updates, yielding much smaller values than the original weight. As a result, minimizing the loss typically requires optimizing both the update and the original weight. In contrast, the existence of direction-preserving updates in weight generation scales the norm of weight updates, weakening the contribution of the original weight. Therefore, the optimizer can solely rely on the generated weight updates to reduce the training loss, making it easier to fit the fine-tuning dataset and thus preferred during optimization. Such an optimization bias gradually dilutes the knowledge encoded in the original weight via large weight updates, leading to ability degradation for general tasks and model overfitting. This motivates us to introduce additional constraints in preventing direction-preserving norm amplification.

\paragraph{Constraining Norm Amplification.} To tackle the unconstrained norm amplification bias during optimization, we propose a simple yet effective solution by fixing the average column norm of the generated low-rank matrix $\mathbf{G}$ to match that of the corresponding left matrix $\mathbf{L}$. This choice is motivated by the fact that both $\mathbf{L}$ and $\mathbf{G}$ serve as the left factor to construct the weight update in the same manner. As a result, the norm of $\textbf{L}$ provides a natural and stable reference scale without introducing additional hyperparameters. With the norm rescaling, Eq.~\ref{eq:gara} can be reformulated as
\begin{equation}\label{eq:gara-f}
\begin{aligned}
    \mathbf{W} &= \hat{\mathbf{W}} + (\mathbf{L} + \mathbf{G}\odot\mathbf{M})\mathbf{R}^\top,\\
    \texttt{(GaRA)}\quad\mathbf{M} &= \frac{\texttt{avg}_j(\texttt{norm}(\mathbf{L}_{\cdot,j}))}{\texttt{avg}_j(\texttt{norm}(\mathbf{G}_{\cdot,j}))}\mathbf{1}^\top,\\
    \mathbf{G} &= \texttt{g}(\mathbf{h}; \mathbf{\Theta}),    
\end{aligned}
\end{equation}
where $\odot$ denotes element-wise multiplication, $\mathbf{M}$ denotes the rescaling matrix, $\texttt{norm}$ computes the $\texttt{L}_1$ norm of the input vector, and $\mathbf{1}\in\mathbf{R}^{r}$ denotes the all-one vector. The equations in Eq.~\ref{eq:gnn} and Eq.~\ref{eq:gara-f} are compiled as GaRA.

\begin{table*}
\caption{\textbf{Evaluation Results on Zero-shot Dataset Transfer (measured by accuracy for node classification tasks and AUC for graph classification tasks: \%).} ``NULL" indicates unsupported tasks.}
\label{tab:main-data}
\centering
\begin{sc}
\begin{small}
\begin{tabular}{lcccccccc} 
\hline
                     & \multicolumn{4}{c}{Node  Classification}                          & \multicolumn{2}{c}{Graph  Classification} & \multicolumn{2}{c}{Link Prediction}                                                                            \\ 
\cmidrule(lr){2-5}\cmidrule(lr){6-7}\cmidrule(r){8-9}
                     & Cora           & WikiCS         & Reddit         & Instagram      & BACE           & HIV                      & \begin{tabular}[c]{@{}c@{}}Amazon\\Photo\end{tabular} & \begin{tabular}[c]{@{}c@{}}Book\\History\end{tabular}  \\ 
\hline
Vicuna-7B             & 0.155          & 0.29           & 0.309          & 0.391          & 0.492          & 0.467                    & 0.576                                                 & 0.503                                                  \\
OFA                  & 0.189          & 0.361          & 0.498          & 0.580          & 0.483          & 0.404                    & 0.499                                                 & 0.457                                                  \\
GraphGPT             & 0.126          & -              & -              & -              & -              & -                        & -                                                     & -                                                      \\
LLaGA                & 0.156          & 0.601          & 0.499          & 0.397          & Null           & Null                     & 0.659                                                 & 0.602                                                  \\
TEA-GLM              & 0.202          & 0.449          & 0.491          & 0.479          & 0.467          & 0.498                    & 0.675                                                 & 0.584                                                  \\
GOFA                 & 0.039          & 0.613          & 0.493          & 0.367          & 0.500          & 0.481                    & 0.504                                                 & -                                                      \\
UniGTE               & 0.216          & 0.635          & 0.512          & 0.603          & 0.534          & 0.501                    & 0.681                                                 & 0.515                                                  \\ 
\hline
\textbf{GaRA (Ours)} & \textbf{0.251} & \textbf{0.669} & \textbf{0.518} & \textbf{0.611} & \textbf{0.549} & \textbf{0.512}           & \textbf{0.708}                                        & \textbf{0.722}                                         \\
\hline
\end{tabular}
\end{small}
\end{sc}
\end{table*}
\subsection{Model Training Strategy}
Following the common practice in graph compression strategy~\cite{chen_LLaGALargeLanguage_2024,wang_UniGTEUnifiedGraph_2025}, we adopt instruction tuning during the model training. Specifically, let $\mathcal{D}=\{(\mathcal{G}^{(i)},\mathcal{G}^{(i)}_s,t^{(i)}_{\mathcal{G}_s},t^{(i)}_\texttt{task},t^{(i)}_\texttt{label})\}_{i=1}^D$ be the training dataset. where $t^{(i)}_{\mathcal{G}_s}$ denotes the $i$-th subgraph node sequence, $t^{(i)}_\texttt{task}$ and $t^{(i)}_\texttt{label}$ denotes the corresponding task description and label, respectively. The model is required to maximize the log-likelihood function
\begin{equation}
\begin{aligned}
\Theta_\texttt{GaRA}^* &= \arg\max_{\Theta_\texttt{GaRA}}\sum_{i=1}^{D}\log P_{\Theta_\texttt{GaRA}}(t^{(i)}_{\texttt{label}}|I_i),\\
I_i &= (\mathcal{G}^{(i)},\mathcal{G}^{(i)}_s,t^{(i)}_{\mathcal{G}_s},t^{(i)}_\texttt{task}),
\end{aligned}
\end{equation}
where $P_{\Theta_\texttt{GaRA}}$ denotes the conditional probability distribution, $\Theta_\texttt{GaRA}$ denotes the set of trainable parameters during model fine-tuning. This target can be achieved by minimizing the cross-entropy loss
\begin{equation}
\mathcal{L}=-\sum_{i=1}^{D}\sum_{j=1}^{|t^{(i)}_{\texttt{label}}|}\log P_{\Theta_\texttt{GaRA}}(t^{(i)}_{\texttt{label},j}|t^{(i)}_{\texttt{label},<j},I_i), 
\end{equation}
where $t^{(i)}_{\texttt{label},<j}=(t^{(i)}_{\texttt{label},k}|k<j)$ denotes all the tokens that before the $j$-th token in the output sequence.

\section{Experiments}
In this section, we provide empirical evaluation results of GaRA on real-world benchmarks. GaRA\footnote{Codes are available at \url{https://github.com/sunjss/GaRA}.} is implemented with PyTorch~\cite{paszke_PyTorchImperativeStyle_2019} and PyTorch Geometric~\cite{fey_FastGraphRepresentation_2019}. The detailed experimental settings are presented in Appendix~\ref{sec:app-setup}.

\begin{table*}
\caption{\textbf{Evaluation Results in Tackling Norm Amplification (measured by accuracy for node classification tasks and AUC for graph classification tasks: \%).} ``Partial" denotes employing arXiv, BookChildren, AmazonComputer, and FB15K237 for tuning, where the first three datasets only involve node classification tasks. ``Full" denotes employing arXiv, BookChildren, AmazonComputer, FB15K237, and ChEMBL for tuning, where the first three datasets involve both node classification and link prediction tasks. ``NR." denotes norm rescaling. ``Rel. Imp." denotes relative improvement of employing norm rescaling in GaRA compared to no explicit norm constraints.}
\label{tab:abl-norm}
\centering
\begin{sc}
\begin{small}
\begin{tabular}{llcccccc} 
\hline
\multirow{2}{*}{Training Data} & \multirow{2}{*}{Model} & \multicolumn{2}{c}{Link  Prediction}                                                                          & \multicolumn{4}{c}{Node  Classification}  \\ 
\cmidrule(lr){3-3}\cmidrule(lr){4-4}\cmidrule(r){5-8}
                               &                        & \begin{tabular}[c]{@{}c@{}}Amazon\\Photo\end{tabular} & \begin{tabular}[c]{@{}c@{}}Book\\History\end{tabular} & Cora    & Reddit & WikiCS  & Instagram    \\ 
\hline
\multirow{3}{*}{Partial}       & GaRA (w/o nr)          & 0.527                                                 & 0.580                                                 & 0.203   & 0.502  & 0.611   & 0.569        \\
                               & GaRA (w nr)            & 0.529                                                 & 0.591                                                 & 0.208   & 0.513  & 0.664   & 0.607        \\
                               & Rel. Imp.              & 0.37\%                                                & 1.89\%                                               & 2.46\%  & 2.19\% & 8.67\%  & 6.67\%       \\
\multirow{3}{*}{Full}          & GaRA (w/o nr)          & 0.539                                                 & 0.581                                                 & 0.164   & 0.506  & 0.603   & 0.583        \\
                               & GaRA (w nr)            & 0.708                                                 & 0.722                                                 & 0.251   & 0.518  & 0.669   & 0.611        \\
                               & Rel. Imp.              & 31.35\%                                               & 24.26\%                                               & 53.04\% & 2.37\% & 10.94\% & 4.80\%       \\
\hline
\end{tabular}
\end{small}
\end{sc}
\end{table*}
\begin{figure*}[htb]
    \centering
    \subfigure[$\mathbf{G}$ and $\mathbf{L}$]{\label{fig:abl-norm-l}
      \begin{minipage}[t]{0.31\linewidth}
          \centering
          \includegraphics[width=\linewidth]{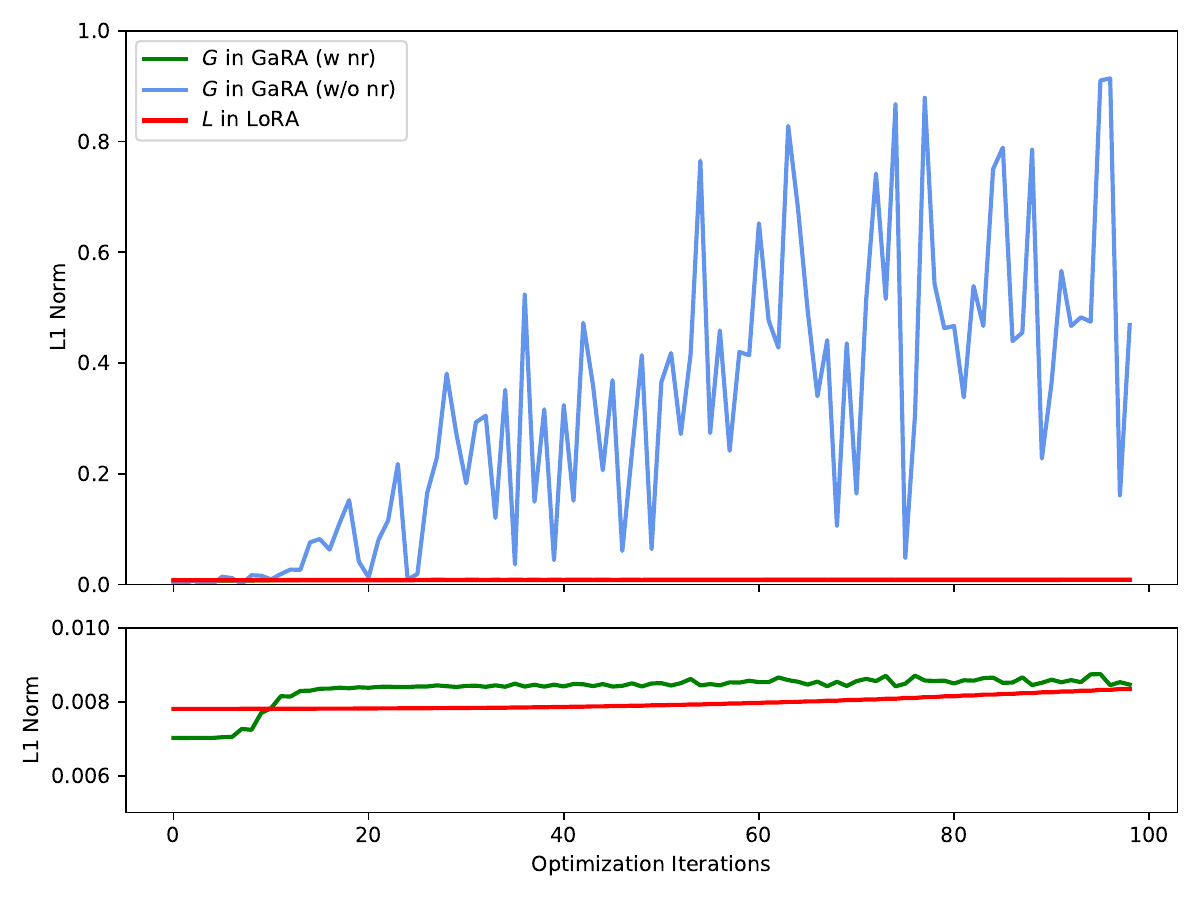}
      \end{minipage}}%
    \hfill
    \subfigure[$\Delta \mathbf{W}$ and $\hat{\mathbf{W}}$]{\label{fig:abl-norm-w}
          \begin{minipage}[t]{0.31\linewidth}
          \centering
          \includegraphics[width=\linewidth]{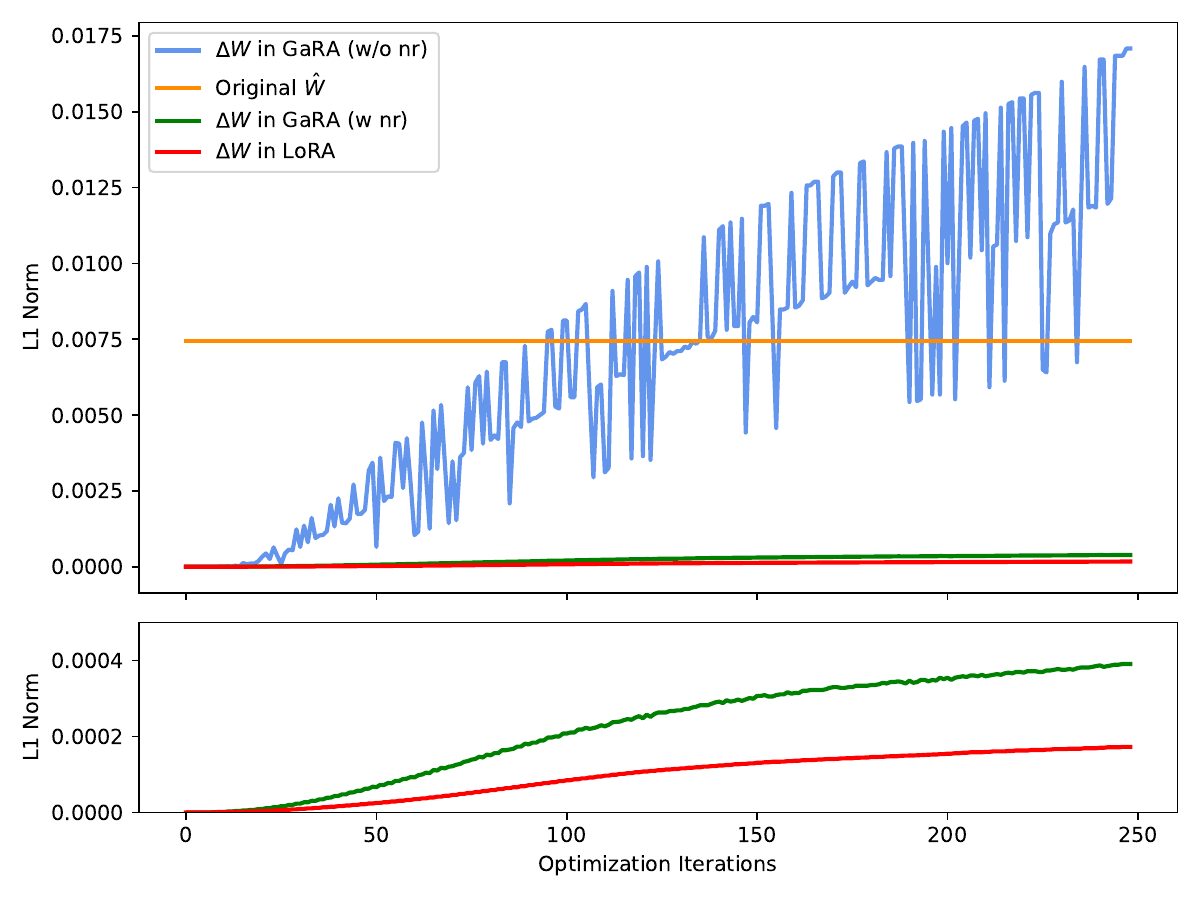}
          \end{minipage}}%
    \hfill
    \subfigure[Similarity and Test Loss]{\label{fig:abl-direction}
          \begin{minipage}[t]{0.32\linewidth}
          \centering
          \includegraphics[width=\linewidth]{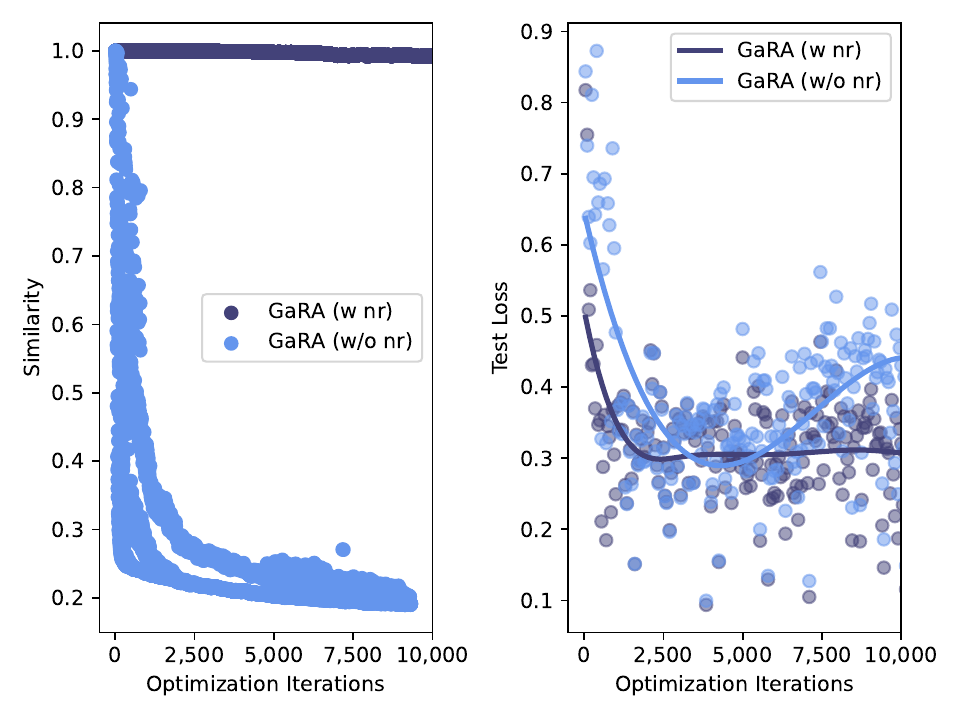}
          \end{minipage}}%
    \caption{\textbf{Effectiveness in Tackling Norm Amplification.} ``nr" denotes our norm rescaling strategy. (a) and (b) The lower figures are the zoom-in illustrations of the upper figures. (c) ``Similarity" denotes the cosine similarity between the updated weight $\mathbf{W}$ and the original weight $\hat{\mathbf{W}}$. Results are averaged over the 50 randomly selected test samples, where the selection is refreshed every 50 training steps.}
    \label{fig:abl-norm}
\end{figure*}
\subsection{Effectiveness Study on Zero-shot Dataset Transfer}
\paragraph{Experimental Setup.}
To evaluate the effectiveness of GaRA in generalizing to unseen datasets, we split training and inference sources into two non-overlapping sets of datasets. Specifically, GaRA and baselines are fine-tuned on five datasets: arXiv~\cite{hu_OpenGraphBenchmark_2020}, BookChildren~\cite{yan_ComprehensiveStudyTextattributed_2023}, AmazonComputer~\cite{shchur_PitfallsGraphNeural_2019}, FB15K237~\cite{liu_OneAllTraining_2023}, and ChEMBL~\cite{chen_TextspaceGraphFoundation_2024}, encompassing node classification, link prediction, and graph classification tasks. On arXiv, BookChildren, and AmazonComputer, both node classification and link prediction tasks are included. After fine-tuning, GaRA and baseline models are evaluated in a zero-shot setting on unseen datasets, including (Cora~\cite{wen_AugmentingLowResourceText_2023}, WikiCS~\cite{mernyei_WikiCSWikipediaBasedBenchmark_2022}, Reddit~\cite{li_GLBenchComprehensiveBenchmark_2024}, and Instagram~\cite{li_GLBenchComprehensiveBenchmark_2024}) for node classification, (AmazonPhoto and BookHistory)~\cite{yan_ComprehensiveStudyTextattributed_2023} for link prediction, and (ChemHIV and ChemBACE)~\cite{wu_MoleculeNetBenchmarkMolecular_2018} for graph classification. Dataset statistics are summarized in Tab.~\ref{tab:statistics}.

Baseline models include the output-level injection method OFA~\cite{liu_OneAllTraining_2023}, input-level injection methods (GraphGPT~\cite{tang_GraphGPTGraphInstruction_2024a}, LLaGA~\cite{chen_LLaGALargeLanguage_2024}, TEA-GLM~\cite{wang_LLMsZeroshotGraph_2024a}, GOFA~\cite{kong_GOFAGenerativeOneAll_2024}, and UniGTE~\cite{wang_UniGTEUnifiedGraph_2025}), and a pre-trained LLM Vicuna-7B~\cite{zheng_JudgingLLMJudgeMTBench_2023}. GaRA is implemented with GCN~\cite{kipf_SemiSupervisedClassificationGraph_2017} and $\texttt{MaxPooling}$ for whole-graph information extraction. In practice, the graph-aware weight generation is only applied to the $21$-st layer. Please refer to Sec.~\ref{ssec:abl} for more ablation details.

\paragraph{Performance.}
Tab.~\ref{tab:main-data} shows the comparison results between GaRA and baseline models for zero-shot dataset transfer, where GaRA consistently achieves better performance across various datasets and task types. For node classification and link prediction tasks, methods that solely depend on the input-level injection can only encode subgraph information, where the whole-graph information is discarded. In contrast, GaRA further performs weight-level injection by explicitly incorporating whole-graph information, encoding both local features at the input level and higher-level features at the weight level. This design avoids potential information loss in the discarded whole graph, resulting in consistently better performance of GaRA on both node and link tasks. Moreover, on graph classification tasks, where whole-graph information is directly provided to LLMs, GaRA also surpasses baseline models. We attribute this advantage to the explicit injection of whole-graph representations, whereas input-injection methods must implicitly infer global structure through pairwise node interactions, which can be less effective for capturing graph representations.

\begin{figure*}[htb]
    \centering
    \begin{minipage}[t]{0.46\linewidth}
          \centering
          \includegraphics[width=0.7\linewidth]{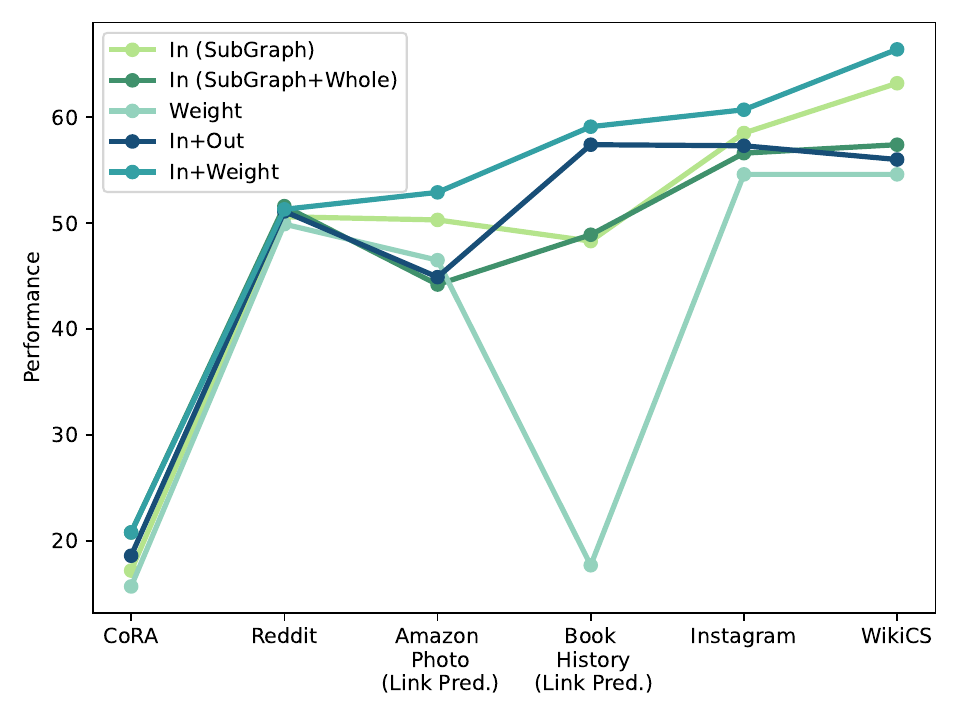}
    \caption{\textbf{Comparison Among Injection Paradigms.} ``SubGraph" solely employs subgraph features as input. ``Whole" further concatenates global features to the input sequences.}
    \label{fig:abl-paradigm}
    \end{minipage}
    \hfill
    \begin{minipage}[t]{0.46\linewidth}
          \centering
          \includegraphics[width=0.75\linewidth]{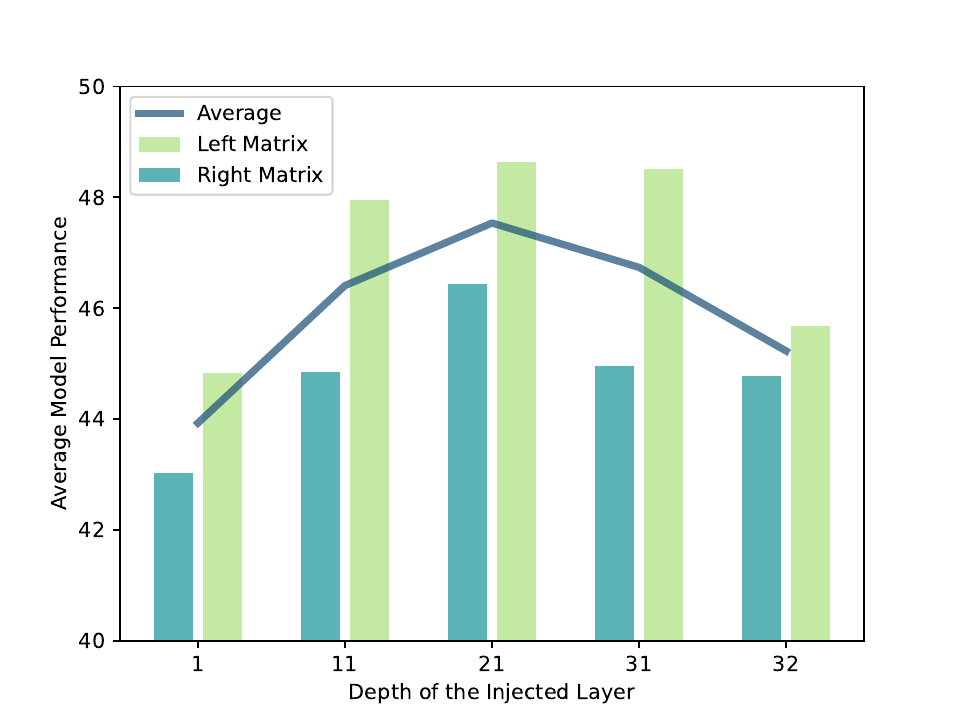}
        \caption{\textbf{Impact of the Injection Position and Generation Target.} ``Left/Right" denotes that the generated low-rank matrix serves as an augmented left or right matrix.}
        \label{fig:abl-pos-target}
    \end{minipage}
\end{figure*}
\subsection{Effectiveness in Tackling Norm Amplification}\label{ssec:norm}
To study the effectiveness of GaRA in tackling norm amplification, we start by presenting the norm values of the generated low-rank matrix with or without norm rescaling in Fig.~\ref{fig:abl-norm-l} and \ref{fig:abl-norm-w}. Then we dive into the potential influence of norm amplification. Based on the analysis in Section~\ref{ssec:gara}, norm amplification may lead to dilution of the original weight and model overfitting. Here, Fig.~\ref{fig:abl-direction} presents the cosine similarity between the updated weight $\mathbf{W}$ and the original weight $\hat{\mathbf{W}}$ for weight dilution and the test loss for model overfitting. The results are from the value matrix projection in the $21$-st layer of the LLM encoder Vicuna-7B. 

As shown in Fig.~\ref{fig:abl-norm-l}, without norm rescaling, the fine-tuning process exhibits continuous amplification on the norm of the generated low-rank matrix. By employing the generated low-rank matrix to compute the weight update $\Delta \mathbf{W}$, either the norm or the direction of the updated weight $\mathbf{W}$ is dominated by the update part over the original weight $\hat{\mathbf{W}}$ (Fig.~\ref{fig:abl-norm-w} and Fig.~\ref{fig:abl-direction} left). As a result, the model without norm rescaling is prone to overfitting (Fig.~\ref{fig:abl-direction} right), leading to degraded performance on transfer tasks. In contrast, GaRA with norm rescaling better preserves the knowledge in the original weights, while only partially adjusting the final weights through the generated updates.

Besides the analysis on model characteristics, Tab.~\ref{tab:abl-norm} further presents the performance comparison with and without norm rescaling. It shows that weight generation without norm rescaling achieves inferior performance compared to generating with norm rescaling. When the amount of training data increases, norm rescaling empowers GaRA to gain enhanced transferability, while generating without norm rescaling gains limited improvements. Therefore, the performance gap between the two methods becomes larger. All these results indicate the negative impact of the norm amplification during optimization and the effectiveness of our norm-rescaling strategy in tackling this architectural optimization bias. Except for norm rescaling, there are other potential strategies for constraining norm amplification. Please refer to Appendix~\ref{ssec:app-abl-norm} for more details.

\subsection{Ablation Study}\label{ssec:abl}
In this section, we conduct ablation studies on the architectural designs in GaRA, including comparing GaRA with other injection solutions, the position choice for weight-level graph information injection, and different generating inputs and targets. Model variants are fine-tuned on arXiv, BookChildren, AmazonComputer, and FB15K237, where arXiv, BookChildren, and AmazonComputer contain only node classification tasks.

\subsubsection{Impact of the Injection Paradigm}
GaRA combines input-level and weight-level injection to provide graph structures for LM backbones. To isolate the contribution of our weight-level injection, we compare parameter-matching models under the same UniGTE framework with subgraph input injection, whole-graph input injection, weight injection, input-weight injection (GaRA), and input-output injection. Specifically, subgraph input injection is based on UniGTE, while whole-graph input injection directly concatenates whole-graph information learned by GNN at the input level. Weight injection gains graph information solely from the weight generator by masking the graph embedding in the LM input. Input-output injection applies a GNN to the whole graph to extract node-level representations and add them to the LLM encoder output. Results are presented in Fig.~\ref{fig:abl-paradigm}, following the ``Partial" setup in Tab.~\ref{tab:abl-norm}. We can see that combining input and weight injection consistently achieves better performance than other variants, indicating that the two injection paradigms provide complementary benefits.

\begin{figure*}[htb]
    \centering
    \begin{minipage}[t]{0.46\linewidth}
          \centering
          \includegraphics[width=0.75\linewidth]{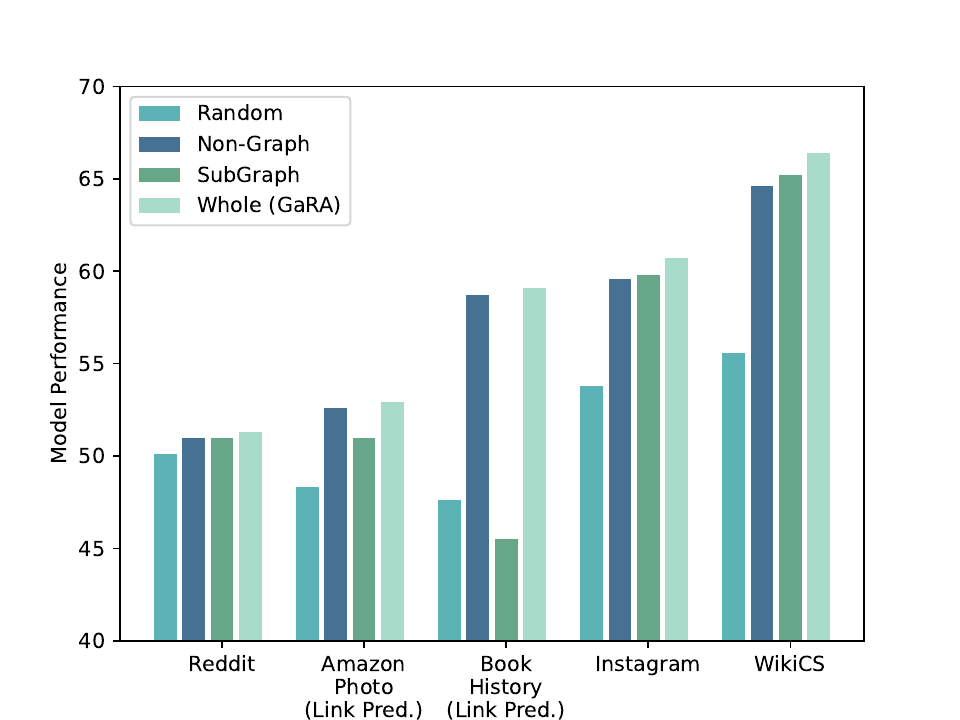}
    \caption{\textbf{Comparison Among Generation Inputs.} ``Random", ``Non-Graph", and "SubGraph" denote generating low-rank matrices from random vectors, embedded task description, and GNN-processed subgraph features, respectively.}
    \label{fig:abl-input}
    \end{minipage}
    \hfill
    \begin{minipage}[t]{0.46\linewidth}
          \centering
          \includegraphics[width=0.75\linewidth]{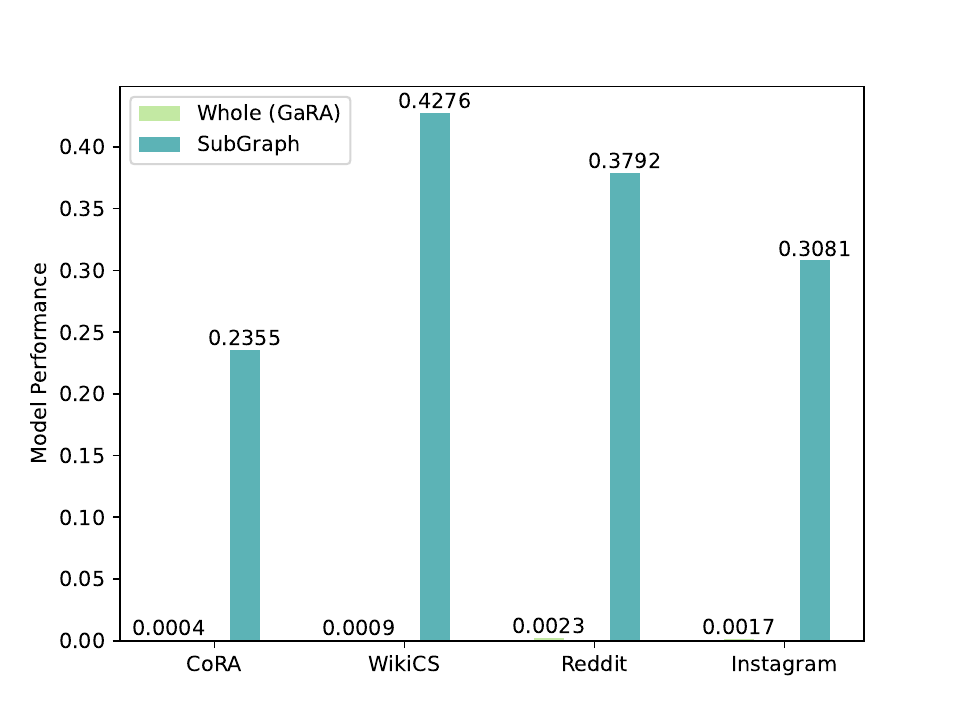}
        \caption{\textbf{Redundancy Analysis with Similarity Comparison.} Cosine similarity is computed between hidden representations obtained with the vanilla LoRA versus GaRA with ``Whole" graph/``SubGraph" as generator inputs.}
        \label{fig:abl-sim}
    \end{minipage}
\end{figure*}
\subsubsection{Impact of the Weight Injecting Position}\label{sssec:abl-pos} 
In practice, GaRA applies weight-level injection to only a single layer of the backbone LLM. To study the impact of different injection positions, we adopt Vicuna-7B as the backbone model, which consists of 32 layers. Injection positions are selected from $\{1, 11, 21, 31, 32\}$-th layer. Fig.~\ref{fig:abl-pos-target} reports the average performance across the node classification and link prediction datasets from Tab.~\ref{tab:main-data}. We can see that injecting at the 21-st layer achieves the best average performance. As the injection position moves from shallow to intermediate layers, performance gradually improves and then degrades near the output layer. This is because injecting graph-aware features into deeper layers yields a more direct influence on the final predictions. However, the recovery trend does not extend to the final layer, as such late-stage perturbations bypass most of the backbone’s forward transformations and disrupt the final output representations.

\subsubsection{Impact of the Generation Target}\label{sssec:abl-target}
GaRA adopts low-rank fine-tuning to adapt pre-trained LLMs to graph tasks. During the generation process, either the left or right low-rank matrix can serve as the weight-level injection target. To investigate the impact of the two generation targets, we conduct an ablation study by employing the generated low-rank matrix $\mathbf{G}$ as an augmented left or right low-rank matrix. Following the same experimental setup in Sec~\ref{sssec:abl-pos}, Fig.~\ref{fig:abl-pos-target} compares the average performance with the two generation targets. Results show that generating the augmented left low-rank matrix consistently outperforms the augmented right matrix. 
We also note that either generating the left or the right matrix for layers at different depths follows the same performance trend in Sec.~\ref{sssec:abl-pos}, further verifying our conclusion.

\subsubsection{Impact of the Generation Input}\label{sssec:abl-input}
GaRA employs the whole-graph representation to generate weight updates. To validate the benefit of whole-graph information, we compare different generation inputs, including random vectors, non-graph representations from the embedded task description, subgraph representations, and whole-graph representations (GaRA). We also compare GaRA with an MLP variant to isolate the contribution of graph-level representations. Please refer to Appendix~\ref{ssec:app-backbone} for details. Results in Fig.~\ref{fig:abl-input} show that GaRA consistently achieves better performance than other variants. Specifically, random vectors cannot provide any informative features and result in the worst performance. Non-graph and subgraph representations achieve similar performance, where discarding whole-graph information leads to inferior performance. This stems from the fact that global features are complementary to the input-level local features, while adding more local features shows redundancy. 

To further demonstrate such redundancy, we compute the cosine similarity between hidden representations obtained with pure LoRA versus the variant with subgraph representations and GaRA. 50 samples are randomly selected from the test sets. Averaged results in Fig.~\ref{fig:abl-sim} show that generating with local information offers similar features to the original features, indicating redundancy. In contrast, generating with global information offers less similar features to the original ones, indicating complementarity. All these results demonstrate the importance of further complementing whole-graph information for the primary input injection.

\begin{table}
\caption{\textbf{Cost Analysis (s/sample).}}
\label{tab:time}
\centering
\begin{sc}
\begin{small}
\begin{tabular}{lcccc} 
\hline
          & UniGTE & GaRA & LLaGA & Vicuna-7B          \\ 
\hline
Train     & 1.28   & 1.46 & 0.83  & -  \\
Inference & 0.67   & 0.71 & 0.12  & 1.32              \\
\hline
\end{tabular}
\vskip -0.2in
\end{small}
\end{sc}
\end{table}
\subsection{Cost Analysis}
GaRA depends on a GNN model to learn global features and a single-layer perceptron to generate the low-rank weight matrix. In practice, we employ GCN~\cite{kipf_SemiSupervisedClassificationGraph_2017} as the backbone to construct our GNN model, which requires a computational complexity of $O(m)+O(n)$. Therefore, the computational bottleneck mainly comes from the LLM backbone instead of our external weight generation module. To empirically evaluate the efficiency of GaRA, we conduct a cost analysis during both the tuning and the inference processes. Specifically, we employ (arXiv, BookChildren, AmazonComputer, and FB15K237) as the training datasets, and (Cora, BookHistory, and AmazonPhoto) as the inference datasets, computing the average cost per sample. Results in Tab.~\ref{tab:time} show that GaRA stays on par with other LLM-based models.

\section{Conclusion}
In this paper, we proposed a novel weight-level injection paradigm for adapting LMs to graph tasks. Unlike prior input- or output-level methods, which either overlook higher-level information from the whole-graph context or struggle to generalize across tasks, our paradigm enables explicit whole-graph injection by generating weight updates during fine-tuning and applying them to hidden-layer representations. In this way, weight-level injection complements subgraph information introduced at the input level and mitigates potential information loss. Building on this paradigm, we then proposed a graph-aware weight generation method, named GaRA. GaRA consists of a LoRA-style weight generation module and a norm-rescaling module. The weight generation module produces weight updates that inject whole-graph information without incurring a quadratic increase in computational cost. The norm-rescaling module addresses a direction-preserving norm amplification issue, which we identify and theoretically characterize as an architectural optimization bias in primary weight generation. Empirical results demonstrate that GaRA consistently outperforms existing baselines on zero-shot transfer tasks and effectively alleviates norm amplification, validating both the effectiveness and robustness of the proposed approach. For the limitation discussion, please refer to Appendix~\ref{sec:app-limitation}.

\section*{Acknowledgement}
This work was supported in part by the National Key R\&D Program of China under Grant 2023YFC2508704,  in part by the National Natural Science Foundation of China under grant number 62236008, in part by the Natural Science Foundation of Beijing under grant number L251082, and in part by Shandong Provincial Natural Science Foundation under project ZR2025ZD01. The authors would like to thank the anonymous reviewers for their helpful comments and suggestions that improved this manuscript.

\section*{Impact Statement}

This paper presents work whose goal is to advance the field of Machine Learning. There are many potential societal consequences of our work, none of which we feel must be specifically highlighted here.


\nocite{langley00}

\bibliography{example_paper}

@inproceedings{reimers-2019-sentence-bert,
    title = "Sentence-BERT: Sentence Embeddings using Siamese BERT-Networks",
    author = "Reimers, Nils and Gurevych, Iryna",
    booktitle = "Proceedings of the 2019 Conference on Empirical Methods in Natural Language Processing",
    month = "11",
    year = "2019",
    publisher = "Association for Computational Linguistics",
}

@misc{llama31_IntroducingLlama31_2024,
  title = {Introducing {{Llama}} 3.1: {{Our}} Most Capable Models to Date},
  shorttitle = {Introducing {{Llama}} 3.1},
  year = 2024,
  journal = {Meta AI},
  urldate = {2026-05-20},
  howpublished = {https://ai.meta.com/blog/meta-llama-3-1/},
  langid = {english}
}

@inproceedings{ma_VisualPerceptionLarge_2024,
  title = {Visual Perception by Large Language Model's Weights},
  booktitle = {Proceedings of the 38th {{International Conference}} on {{Neural Information Processing Systems}}},
  author = {Ma, Feipeng and Xue, Hongwei and Zhou, Yizhou and Wang, Guangting and Rao, Fengyun and Yan, Shilin and Zhang, Yueyi and Wu, Siying and Shou, Mike Zheng and Sun, Xiaoyan},
  year = 2024,
  volume = {37},
  pages = {28615--28635},
  publisher = {Curran Associates Inc.},
  address = {Red Hook, NY, USA},
  urldate = {2026-05-19}
}

@inproceedings{zhao_FullyinductiveNodeClassification_2024,
  title = {Fully-Inductive {{Node Classification}} on {{Arbitrary Graphs}}},
  booktitle = {The {{Thirteenth International Conference}} on {{Learning Representations}}},
  author = {Zhao, Jianan and Zhu, Zhaocheng and Galkin, Mikhail and Mostafa, Hesham and Bronstein, Michael M. and Tang, Jian},
  year = {2024},
  urldate = {2025-11-19},
  langid = {english}
}

@article{bessadok_GraphNeuralNetworks_2023,
  title = {Graph {{Neural Networks}} in {{Network Neuroscience}}},
  author = {Bessadok, Alaa and Mahjoub, Mohamed Ali and Rekik, Islem},
  year = {2023},
  journal = {IEEE Transactions on Pattern Analysis and Machine Intelligence},
  volume = {45},
  number = {5},
  pages = {5833--5848},
  issn = {1939-3539},
  urldate = {2023-11-05}
}

@inproceedings{paszke_PyTorchImperativeStyle_2019,
  title = {{{PyTorch}}: An Imperative Style, High-Performance Deep Learning Library},
  shorttitle = {{{PyTorch}}},
  booktitle = {International {{Conference}} on {{Neural Information Processing Systems}}},
  author = {Paszke, Adam and Gross, Sam and Massa, Francisco and Lerer, Adam and Bradbury, James and Chanan, Gregory and Killeen, Trevor and Lin, Zeming and Gimelshein, Natalia and Antiga, Luca and Desmaison, Alban and K{\"o}pf, Andreas and Yang, Edward and DeVito, Zach and Raison, Martin and Tejani, Alykhan and Chilamkurthy, Sasank and Steiner, Benoit and Fang, Lu and Bai, Junjie and Chintala, Soumith},
  year = {2019},
  pages = {8026--8037},
  publisher = {{Curran Associates Inc.}},
  address = {{Red Hook, NY, USA}}
}

@inproceedings{hamilton_InductiveRepresentationLearning_2017,
  title = {Inductive Representation Learning on Large Graphs},
  booktitle = {International {{Conference}} on {{Neural Information Processing Systems}}},
  author = {Hamilton, William L. and Ying, Rex and Leskovec, Jure},
  year = {2017},
  pages = {1025--1035},
  publisher = {{Curran Associates Inc.}},
  address = {{Red Hook, USA}}
}

@inproceedings{hu_OpenGraphBenchmark_2020,
  title = {Open {{Graph Benchmark}}: {{Datasets}} for {{Machine Learning}} on {{Graphs}}},
  shorttitle = {Open {{Graph Benchmark}}},
  booktitle = {Advances in {{Neural Information Processing Systems}}},
  author = {Hu, Weihua and Fey, Matthias and Zitnik, Marinka and Dong, Yuxiao and Ren, Hongyu and Liu, Bowen and Catasta, Michele and Leskovec, Jure},
  year = {2020},
  volume = {33},
  pages = {22118--22133},
  publisher = {{Curran Associates, Inc.}},
  urldate = {2024-01-28}
}

@inproceedings{fey_FastGraphRepresentation_2019,
  title = {Fast {{Graph Representation Learning}} with {{PyTorch Geometric}}},
  booktitle = {International {{Conference}} on {{Learning Representations Workshop}} on {{Graphs}} and {{Manifolds}}},
  author = {Fey, Matthias and Lenssen, Jan Eric},
  year = {2019},
  langid = {english}
}

@inproceedings{kipf_SemiSupervisedClassificationGraph_2017,
  title = {Semi-{{Supervised Classification}} with {{Graph Convolutional Networks}}},
  booktitle = {International {{Conference}} on {{Learning Representations}}},
  author = {Kipf, Thomas N. and Welling, Max},
  year = {2017},
  eprint = {1609.02907},
  address = {{Toulon, France}},
  archiveprefix = {arxiv},
  langid = {english}
}

@article{li_DeepGCNsMakingGCNs_2021,
  title = {{{DeepGCNs}}: {{Making GCNs Go}} as {{Deep}} as {{CNNs}}},
  shorttitle = {{{DeepGCNs}}},
  author = {Li, Guohao and M{\"u}ller, Matthias and Qian, Guocheng and Delgadillo, Itzel C. and Abualshour, Abdulellah and Thabet, Ali and Ghanem, Bernard},
  year = {2021},
  journal = {IEEE Transactions on Pattern Analysis and Machine Intelligence},
  eprint = {1910.06849},
  pages = {1--1},
  issn = {0162-8828, 2160-9292, 1939-3539},
  doi = {10/gk3x7j},
  urldate = {2021-07-03},
  archiveprefix = {arxiv},
  langid = {english}
}

@inproceedings{lim_LargeScaleLearning_2021,
  title = {Large {{Scale Learning}} on {{Non-Homophilous Graphs}}: {{New Benchmarks}} and {{Strong Simple Methods}}},
  shorttitle = {Large {{Scale Learning}} on {{Non-Homophilous Graphs}}},
  booktitle = {Advances in {{Neural Information Processing Systems}}},
  author = {Lim, Derek and Hohne, Felix and Li, Xiuyu and Huang, Sijia Linda and Gupta, Vaishnavi and Bhalerao, Omkar and Lim, Ser Nam},
  year = {2021},
  volume = {34},
  pages = {20887--20902},
  publisher = {{Curran Associates, Inc.}},
  urldate = {2024-01-06}
}

@article{t-SNE,
  title={Visualizing data using t-SNE.},
  author={Van der Maaten, Laurens and Hinton, Geoffrey},
  journal={Journal of machine learning research},
  volume={9},
  number={11},
  year={2008}
}

@inproceedings{morris_TUDatasetCollectionBenchmark_2020,
  title = {{{TUDataset}}: {{A}} Collection of Benchmark Datasets for Learning with Graphs},
  shorttitle = {{{TUDataset}}},
  booktitle = {International {{Conference}} on {{Machine Learning Workshop}} on {{Graph Representation Learning}} and {{Beyond}}},
  author = {Morris, Christopher and Kriege, Nils M. and Bause, Franka and Kersting, Kristian and Mutzel, Petra and Neumann, Marion},
  year = {2020},
  eprint = {2007.08663},
  primaryclass = {cs, stat},
  publisher = {{arXiv}},
  address = {{Virtual Only}},
  urldate = {2024-01-28},
  archiveprefix = {arxiv}
}

@article{shchur_PitfallsGraphNeural_2019,
  title = {Pitfalls of {{Graph Neural Network Evaluation}}},
  author = {Shchur, Oleksandr and Mumme, Maximilian and Bojchevski, Aleksandar and G{\"u}nnemann, Stephan},
  year = {2019},
  journal = {arXiv:1811.05868 [cs, stat]},
  eprint = {1811.05868},
  primaryclass = {cs, stat},
  urldate = {2021-07-03},
  archiveprefix = {arxiv},
  langid = {english}
}

@inproceedings{sun_RelievingAggregatingEffect_2025,
  title = {Relieving the {{Over-Aggregating Effect}} in {{Graph Transformers}}},
  booktitle = {The {{Thirty-ninth Annual Conference}} on {{Neural Information Processing Systems}}},
  author = {Sun, Junshu and Chang, Wanxing and Yang, Chenxue and Huang, Qingming and Wang, Shuhui},
  year = {2025},
  urldate = {2025-11-08},
  langid = {english}
}

@inproceedings{xu*_HowPowerfulAre_2019,
  title = {How {{Powerful}} Are {{Graph Neural Networks}}?},
  booktitle = {International {{Conference}} on {{Learning Representations}}},
  author = {Xu, Keyulu and Hu, Weihua and Leskovec, Jure and Jegelka, Stefanie},
  year = {2019},
  address = {{New Orleans, LA, USA}},
  urldate = {2022-08-05},
  langid = {english}
}

@inproceedings{sun_AllinARow_2023,
  title = {All in a Row: {{Compressed}} Convolution Networks for Graphs},
  booktitle = {International {{Conference}} on {{Machine Learning}}},
  author = {Sun, Junshu and Wang, Shuhui and Han, Xinzhe and Xue, Zhe and Huang, Qingming},
  year = {2023},
  volume = {202},
  pages = {33061--33076},
  publisher = {{PMLR}},
  address = {{Honolulu, USA}}
}

@inproceedings{ying_TransformersReallyPerform_2021,
  title = {Do {{Transformers Really Perform Bad}} for {{Graph Representation}}?},
  booktitle = {Advances in {{Neural Information Processing Systems}}},
  author = {Ying, Chengxuan and Cai, Tianle and Luo, Shengjie and Zheng, Shuxin and Ke, Guolin and He, Di and Shen, Yanming and Liu, Tie-Yan},
  year = {2021},
  volume = {34},
  eprint = {2106.05234},
  pages = {28877--28888},
  publisher = {{Curran Associates, Inc.}},
  urldate = {2021-12-22},
  archiveprefix = {arxiv}
}

@article{zhang_DynamicGraphMessage_2022,
  title = {Dynamic {{Graph Message Passing Networks}}},
  author = {Zhang, Li and Chen, Mohan and Arnab, Anurag and Xue, Xiangyang and Torr, Philip H. S.},
  year = {2022},
  journal = {IEEE Transactions on Pattern Analysis and Machine Intelligence},
  pages = {5712--5730},
  urldate = {2022-11-14},
  langid = {english}
}

@inproceedings{zhang_LinkPredictionBased_2018,
  title = {Link {{Prediction Based}} on {{Graph Neural Networks}}},
  booktitle = {International {{Conference}} on {{Neural Information Processing Systems}}},
  author = {Zhang, Muhan and Chen, Yixin},
  year = {2018},
  pages = {5171--5181},
  publisher = {{Curran Associates Inc.}},
  address = {{Red Hook, NY, USA}},
  urldate = {2022-08-09}
}

@misc{openai_GPT4oSystemCard_2024,
  title = {{{GPT-4o System Card}}},
  author = {OpenAI and Hurst, Aaron and Lerer, Adam and Goucher, Adam P. and Perelman, Adam and Ramesh, Aditya and Clark, Aidan and Ostrow, A. J. and Welihinda, Akila and Hayes, Alan and Radford, Alec and M{\k a}dry, Aleksander and {Baker-Whitcomb}, Alex and Beutel, Alex et al.},
  year = {2024},
  number = {arXiv:2410.21276},
  eprint = {2410.21276},
  primaryclass = {cs},
  publisher = {arXiv},
  archiveprefix = {arXiv}
}

@inproceedings{li_ZeroGInvestigatingCrossdataset_2024,
  title = {{{ZeroG}}: {{Investigating Cross-dataset Zero-shot Transferability}} in {{Graphs}}},
  shorttitle = {{{ZeroG}}},
  booktitle = {{{ACM SIGKDD Conference}} on {{Knowledge Discovery}} and {{Data Mining}}},
  author = {Li, Yuhan and Wang, Peisong and Li, Zhixun and Yu, Jeffrey Xu and Li, Jia},
  year = {2024},
  pages = {1725--1735},
  publisher = {Association for Computing Machinery},
  address = {New York, NY, USA}
}

@inproceedings{liu_OneAllTraining_2023,
  title = {One {{For All}}: {{Towards Training One Graph Model For All Classification Tasks}}},
  shorttitle = {One {{For All}}},
  booktitle = {International {{Conference}} on {{Learning Representations}}},
  author = {Liu, Hao and Feng, Jiarui and Kong, Lecheng and Liang, Ningyue and Tao, Dacheng and Chen, Yixin and Zhang, Muhan},
  year = {2023},
}

@inproceedings{wang_CanLanguageModels_2023,
  title = {Can Language Models Solve Graph Problems in Natural Language?},
  booktitle = {International {{Conference}} on {{Neural Information Processing Systems}}},
  author = {Wang, Heng and Feng, Shangbin and He, Tianxing and Tan, Zhaoxuan and Han, Xiaochuang and Tsvetkov, Yulia},
  year = {2023},
  pages = {30840--30861},
  publisher = {Curran Associates Inc.},
  address = {Red Hook, NY, USA},
}

@inproceedings{chen_LLaGALargeLanguage_2024,
  title = {{{LLaGA}}: {{Large Language}} and {{Graph Assistant}}},
  shorttitle = {{{LLaGA}}},
  booktitle = {International {{Conference}} on {{Machine Learning}}},
  author = {Chen, Runjin and Zhao, Tong and Jaiswal, Ajay Kumar and Shah, Neil and Wang, Zhangyang},
  year = {2024},
  pages = {7809--7823},
  publisher = {PMLR},
}

@inproceedings{zhang_GraphTranslatorAligningGraph_2024a,
  title = {{{GraphTranslator}}: {{Aligning Graph Model}} to {{Large Language Model}} for {{Open-ended Tasks}}},
  shorttitle = {{{GraphTranslator}}},
  booktitle = {The {{Web Conference}} 2024},
  author = {Zhang, Mengmei and Sun, Mingwei and Wang, Peng and Fan, Shen and Mo, Yanhu and Xu, Xiaoxiao and Liu, Hong and Yang, Cheng and Shi, Chuan},
  year = {2024},
  month = may,
  urldate = {2025-11-08},
  langid = {english}
}

@inproceedings{sun_DynamicMessagePassing_2024,
  title = {Towards {{Dynamic Message Passing}} on {{Graphs}}},
  booktitle = {Conference on {{Neural Information Processing Systems}}},
  author = {Sun, Junshu and Yang, Chenxue and Ji, Xiangyang and Huang, Qingming and Wang, Shuhui},
  year = {2024},
  month = dec,
  eprint = {2410.23686},
  primaryclass = {cs},
  urldate = {2025-01-27},
  archiveprefix = {arXiv}
}

@inproceedings{devlin_BERTPretrainingDeep_2019,
  title = {{{BERT}}: {{Pre-training}} of {{Deep Bidirectional Transformers}} for {{Language Understanding}}},
  shorttitle = {{{BERT}}},
  booktitle = {Conference of the {{North American Chapter}} of the {{Association}} for {{Computational Linguistics}}: {{Human Language Technologies}}},
  author = {Devlin, Jacob and Chang, Ming-Wei and Lee, Kenton and Toutanova, Kristina},
  editor = {Burstein, Jill and Doran, Christy and Solorio, Thamar},
  year = {2019},
  pages = {4171--4186},
  publisher = {Association for Computational Linguistics},
  address = {Minneapolis, Minnesota},
}

@inproceedings{chen_TextspaceGraphFoundation_2024,
  title = {Text-Space {{Graph Foundation Models}}: {{Comprehensive Benchmarks}} and {{New Insights}}},
  shorttitle = {Text-Space {{Graph Foundation Models}}},
  booktitle = {Advances in {{Neural Information Processing Systems}}},
  author = {Chen, Zhikai and Mao, Haitao and Liu, Jingzhe and Song, Yu and Li, Bingheng and Jin, Wei and Fatemi, Bahare and Tsitsulin, Anton and Perozzi, Bryan and Liu, Hui and Tang, Jiliang},
  year = {2024},
  month = dec,
  volume = {37},
  pages = {7464--7492},
  urldate = {2025-09-21},
  langid = {english}
}

@inproceedings{kong_GOFAGenerativeOneAll_2024,
  title = {{{GOFA}}: {{A Generative One-For-All Model}} for {{Joint Graph Language Modeling}}},
  shorttitle = {{{GOFA}}},
  booktitle = {The {{Thirteenth International Conference}} on {{Learning Representations}}},
  author = {Kong, Lecheng and Feng, Jiarui and Liu, Hao and Huang, Chengsong and Huang, Jiaxin and Chen, Yixin and Zhang, Muhan},
  year = {2024},
  month = oct,
  urldate = {2025-09-21},
  langid = {english}
}

@inproceedings{tang_GraphGPTGraphInstruction_2024a,
  title = {{{GraphGPT}}: {{Graph Instruction Tuning}} for {{Large Language Models}}},
  shorttitle = {{{GraphGPT}}},
  booktitle = {Proceedings of the 47th {{International ACM SIGIR Conference}} on {{Research}} and {{Development}} in {{Information Retrieval}}},
  author = {Tang, Jiabin and Yang, Yuhao and Wei, Wei and Shi, Lei and Su, Lixin and Cheng, Suqi and Yin, Dawei and Huang, Chao},
  year = {2024},
  pages = {491--500},
  publisher = {Association for Computing Machinery},
  address = {New York, NY, USA},
  urldate = {2025-09-21}
}

@inproceedings{mernyei_WikiCSWikipediaBasedBenchmark_2022,
  title = {Wiki-{{CS}}: {{A Wikipedia-Based Benchmark}} for {{Graph Neural Networks}}},
  shorttitle = {Wiki-{{CS}}},
  booktitle = {International {{Conference}} on {{Machine Learning Workshop}} on {{Graph Representation Learning}} and {{Beyond}}},
  author = {Mernyei, P{\'e}ter and Cangea, C{\u a}t{\u a}lina},
  year = {2022},
  month = jan,
  eprint = {2007.02901},
  primaryclass = {cs},
  publisher = {arXiv},
  archiveprefix = {arXiv}
}

@inproceedings{wang_LLMsZeroshotGraph_2024a,
  title = {{{LLMs}} as Zero-Shot Graph Learners: Alignment of {{GNN}} Representations with {{LLM}} Token Embeddings},
  shorttitle = {{{LLMs}} as Zero-Shot Graph Learners},
  booktitle = {Proceedings of the 38th {{International Conference}} on {{Neural Information Processing Systems}}},
  author = {Wang, Duo and Zuo, Yuan and Li, Fengzhi and Wu, Junjie},
  year = {2024},
  volume = {37},
  pages = {5950--5973},
  publisher = {Curran Associates Inc.},
  address = {Red Hook, NY, USA},
  urldate = {2026-01-22}
}

@inproceedings{wang_UniGTEUnifiedGraph_2025,
  title = {{{UniGTE}}: {{Unified Graph}}--{{Text Encoding}} for {{Zero-Shot Generalization}} across {{Graph Tasks}} and {{Domains}}},
  shorttitle = {{{UniGTE}}},
  booktitle = {The {{Thirty-ninth Annual Conference}} on {{Neural Information Processing Systems}}},
  author = {Wang, Duo and Zuo, Yuan and Lu, Guangyue and Wu, Junjie},
  year = {2025},
  urldate = {2026-01-22},
  langid = {english}
}

@article{wu_MoleculeNetBenchmarkMolecular_2018,
  title = {{{MoleculeNet}}: A Benchmark for Molecular Machine Learning},
  shorttitle = {{{MoleculeNet}}},
  year = {2018},
  journal = {Chemical Science},
  author = {Wu, Zhenqin and Ramsundar, Bharath and Feinberg, Evan N. and Gomes, Joseph and Geniesse, Caleb and Pappu, Aneesh S. and Leswing, Karl and Pande, Vijay},
  volume = {9},
  number = {2},
  pages = {513--530},
  issn = {2041-6520},
  doi = {10.1039/c7sc02664a},
  urldate = {2026-01-28},
  langid = {american}
}

@article{li_GLBenchComprehensiveBenchmark_2024,
  title = {{{GLBench}}: {{A Comprehensive Benchmark}} for {{Graph}} with {{Large Language Models}}},
  shorttitle = {{{GLBench}}},
  author = {Li, Yuhan and Wang, Peisong and Zhu, Xiao and Chen, Aochuan and Jiang, Haiyun and Cai, Deng and Chan, Victor W. and Li, Jia},
  year = {2024},
  journal = {Advances in Neural Information Processing Systems},
  volume = {37},
  pages = {42349--42368},
  urldate = {2026-01-28},
  langid = {english}
}

@inproceedings{wen_AugmentingLowResourceText_2023,
  title = {Augmenting {{Low-Resource Text Classification}} with {{Graph-Grounded Pre-training}} and {{Prompting}}},
  booktitle = {Proceedings of the 46th {{International ACM SIGIR Conference}} on {{Research}} and {{Development}} in {{Information Retrieval}}},
  author = {Wen, Zhihao and Fang, Yuan},
  year = {2023},
  pages = {506--516},
  publisher = {Association for Computing Machinery},
  address = {New York, NY, USA},
  doi = {10.1145/3539618.3591641},
  urldate = {2026-01-28}
}

@inproceedings{yan_ComprehensiveStudyTextattributed_2023,
  title = {A Comprehensive Study on Text-Attributed Graphs: Benchmarking and Rethinking},
  shorttitle = {A Comprehensive Study on Text-Attributed Graphs},
  booktitle = {Proceedings of the 37th {{International Conference}} on {{Neural Information Processing Systems}}},
  author = {Yan, Hao and Li, Chaozhuo and Long, Ruosong and Yan, Chao and Zhao, Jianan and Zhuang, Wenwen and Yin, Jun and Zhang, Peiyan and Han, Weihao and Sun, Hao and Deng, Weiwei and Zhang, Qi and Sun, Lichao and Xie, Xing and Wang, Senzhang},
  year = {2023},
  pages = {17238--17264},
  publisher = {Curran Associates Inc.},
  address = {Red Hook, NY, USA},
  urldate = {2026-01-28}
}

@inproceedings{he_UniGraphLearningUnified_2025,
  title = {{{UniGraph}}: {{Learning}} a {{Unified Cross-Domain Foundation Model}} for {{Text-Attributed Graphs}}},
  shorttitle = {{{UniGraph}}},
  booktitle = {Proceedings of the 31st {{ACM SIGKDD Conference}} on {{Knowledge Discovery}} and {{Data Mining V}}.1},
  author = {He, Yufei and Sui, Yuan and He, Xiaoxin and Hooi, Bryan},
  year = {2025},
  series = {{{KDD}} '25},
  pages = {448--459},
  publisher = {Association for Computing Machinery},
  address = {New York, NY, USA},
  doi = {10.1145/3690624.3709277},
  urldate = {2026-01-28}
}

@inproceedings{lv_GraphPrompterMultiStageAdaptive_2025,
  title = {{{GraphPrompter}}: {{Multi-Stage Adaptive Prompt Optimization}} for {{Graph In-Context Learning}}},
  shorttitle = {{{GraphPrompter}}},
  booktitle = {2025 {{IEEE}} 41st {{International Conference}} on {{Data Engineering}} ({{ICDE}})},
  author = {Lv, Rui and Zhang, Zaixi and Zhang, Kai and Liu, Qi and Gao, Weibo and Liu, Jiawei and Yan, Jiaxia and Yue, Linan and Yao, Fangzhou},
  year = {2025},
  pages = {3917--3930},
  publisher = {IEEE Computer Society},
  doi = {10.1109/ICDE65448.2025.00292},
  urldate = {2026-01-28},
  langid = {english}
}

@article{zheng_JudgingLLMJudgeMTBench_2023,
  title = {Judging {{LLM-as-a-Judge}} with {{MT-Bench}} and {{Chatbot Arena}}},
  author = {Zheng, Lianmin and Chiang, Wei-Lin and Sheng, Ying and Zhuang, Siyuan and Wu, Zhanghao and Zhuang, Yonghao and Lin, Zi and Li, Zhuohan and Li, Dacheng and Xing, Eric and Zhang, Hao and Gonzalez, Joseph E. and Stoica, Ion},
  year = {2023},
  journal = {Advances in Neural Information Processing Systems},
  volume = {36},
  pages = {46595--46623},
  urldate = {2026-01-28},
  langid = {english}
}

@inproceedings{hu_LoRALowRankAdaptation_2021,
  title = {{{LoRA}}: {{Low-Rank Adaptation}} of {{Large Language Models}}},
  shorttitle = {{{LoRA}}},
  booktitle = {International {{Conference}} on {{Learning Representations}}},
  author = {Hu, Edward J. and Shen, Yelong and Wallis, Phillip and {Allen-Zhu}, Zeyuan and Li, Yuanzhi and Wang, Shean and Wang, Lu and Chen, Weizhu},
  year = {2021},
  urldate = {2026-01-28},
  langid = {english}
}
\bibliographystyle{icml2026}

\newpage
\appendix
\onecolumn
\renewcommand{\thefigure}{A\arabic{figure}}
\renewcommand{\thetable}{A\arabic{table}}
\renewcommand{\theequation}{A\arabic{equation}}
\section{Proof}\label{sec:app-proof}
\textbf{Theorem}~\ref{thrm:exist}~(Direction-preserving Update).
\textit{Let $\Delta\mathbf{W}=\texttt{f}(\mathbf{h}; \mathbf{\Theta})$. For a fixed input $\mathbf{h}$ and the Jacobian $J_\texttt{f}(\mathbf{\Theta})=\partial\texttt{f}(\mathbf{h}; \mathbf{\Theta})/\partial\mathbf{\Theta}$, there exists a nonzero parameter direction $\delta\mathbf{\Theta}$ such that under the update $\mathbf{\Theta}\leftarrow\mathbf{\Theta}+\epsilon\delta\mathbf{\Theta}$, the induced weight update satisfies $\Delta\mathbf{W}\leftarrow(1+\epsilon)\Delta\mathbf{W}+O(\epsilon^2)$.}

\begin{proof}
Based on the first-order Taylor expansion
\begin{equation}\label{eq:f-expansion}
\texttt{f}(\mathbf{h};\mathbf{\Theta}+\epsilon\delta\Theta)=\texttt{f}(\mathbf{h};\mathbf{\Theta})+\epsilon J_\texttt{f}(\mathbf{\Theta})\delta\Theta+O(\epsilon^2).
\end{equation}
Our target is to achieve $\texttt{f}(\mathbf{h};\mathbf{\Theta}+\epsilon\delta\Theta)=(1+\epsilon)\texttt{f}(\mathbf{h};\mathbf{\Theta})$. Substituting Eq.~\ref{eq:f-expansion} into the above equation yields
\begin{equation}
\texttt{f}(\mathbf{h};\mathbf{\Theta})+\epsilon J_\texttt{f}(\mathbf{\Theta})\delta\Theta+O(\epsilon^2)=(1+\epsilon)\texttt{f}(\mathbf{h};\mathbf{\Theta}).
\end{equation}
Ignoring higher-order terms, this equation is reduced to a linear function
\begin{equation}\label{eq:linear}
J_\texttt{f}(\mathbf{\Theta})\delta\Theta=\texttt{f}(\mathbf{h};\mathbf{\Theta}).
\end{equation}
In practice, $\texttt{f}:\mathbb{R}^{d}\rightarrow\mathbb{R}^{p\times q}$ is typically implemented as a single-layer perceptron. Therefore, the number of parameters in $\texttt{f}$ is no smaller than the output dimension $p\times q$. Under this implementation, the Jacobian $J_\texttt{f}(\mathbf{\Theta})$ is full row-rank, there exists a solution $\delta\Theta$ for Eq.~\ref{eq:linear} given any $\texttt{f}(\mathbf{h};\mathbf{\Theta})$. The results also holds when $\texttt{f}(\mathbf{h};\mathbf{\Theta})$ is employed to generate the low-rank matrices for weight updates $\Delta\mathbf{W}=\texttt{f}(\mathbf{h}; \mathbf{\Theta})\mathbf{R}^\top$ or $\Delta\mathbf{W}=\mathbf{L}\texttt{f}(\mathbf{h}; \mathbf{\Theta})$, since left or right multiplication by low-rank matrices does not change the above derivation. This completes the proof.
\end{proof}

\textbf{Corollary}~\ref{thrm:exist-lora}~(Direction-preserving Update on LoRA-style Generation).
\textit{Let $\Delta\mathbf{W}=\mathbf{LR}^\top+\texttt{g}(\mathbf{h}; \mathbf{\Theta})\mathbf{R}^\top$. For a fixed input $\mathbf{h}$ and the Jacobian $J_\texttt{g}(\mathbf{\Theta})=\partial\texttt{g}(\mathbf{h}; \mathbf{\Theta})/\partial\mathbf{\Theta}$, there exists a nonzero parameter direction $\delta\mathbf{\Theta}$ such that under the update $\mathbf{\Theta}\leftarrow\mathbf{\Theta}+\epsilon\delta\mathbf{\Theta}$, the induced weight update satisfies $\Delta\mathbf{W}\leftarrow(1+\epsilon)\Delta\mathbf{W}+O(\epsilon)$.}

\begin{proof}
The result in Theorem~\ref{thrm:exist} directly extends to the LoRA setting. Specifically, Eq.~\ref{eq:linear} can be written as
\begin{equation}\label{eq:gara-linear}
J_\texttt{g}(\mathbf{\Theta})\delta\Theta=\texttt{g}(\mathbf{h};\mathbf{\Theta}).
\end{equation}
When $\texttt{f}(\mathbf{h};\mathbf{\Theta})$ is employed to generate the weight update via $\Delta\mathbf{W}=\mathbf{LR}^\top+\texttt{g}(\mathbf{h}; \mathbf{\Theta})\mathbf{R}^\top=\texttt{f}(\mathbf{h}; \mathbf{\Theta})\mathbf{R}^\top$ or $\Delta\mathbf{W}=\mathbf{LR}^\top+\mathbf{L}\texttt{g}(\mathbf{h}; \mathbf{\Theta})=\mathbf{L}\texttt{f}(\mathbf{h}; \mathbf{\Theta})$, we have $\texttt{f}(\mathbf{h}; \mathbf{\Theta})=\texttt{g}(\mathbf{h}; \mathbf{\Theta})+\mathbf{L}$ or $\texttt{g}(\mathbf{h}; \mathbf{\Theta})+\mathbf{R}^\top$. Since $\mathbf{L}$ and $\mathbf{R}$ is independent of $\Theta$, the Jacobian $J_\texttt{g}(\Theta)=J_\texttt{f}(\Theta)$, and thus remains full row rank. Therefore, Eq.~\ref{eq:gara-linear} also admits a solution for any $\texttt{g}(\mathbf{h}; \mathbf{\Theta})$, empowering the direction-preserving update property to hold in the low-rank scenario.
\end{proof}

\begin{table*}
\caption{\textbf{Dataset Statistics.} ``NC", ``LP", ``GC" denote node classification, link prediction, and graph classification, respectively.}
\label{tab:statistics}
\centering
\begin{sc}
\begin{small}
\begin{tabular}{llrrrrr} 
\hline
Type  & Dataset   & Task   & \#Nodes & \#Edges~  & \#Classes/Taks & \#Graphs  \\ 
\hline
Train & arXiv     & NC, LP & 169,343 & 1,166,243 & 40             & 1         \\
Train & AmazonComputers  & NC, LP & 87,229  & 721,081   & 10             & 1         \\
Train & BookChildren  & NC, LP & 76,875  & 1,554,578 & 24             & 1         \\
Train & FB15K237  & LP     & 14,541  & 310,116   & 237            & 1         \\
Train & ChEMBL    & GC     & 26      & 56        & 1,048          & 365,065   \\ 
\hline
Test  & Instagram & NC     & 11,339  & 144,010   & 2              & 1         \\
Test  & Reddit    & NC     & 33,434  & 198,438   & 2              & 1         \\
Test  & WikiCS    & NC     & 11,701  & 216,123   & 10             & 1         \\
Test  & Cora      & NC     & 25,120  & 91,140    & 70             & 1         \\
Test  & BookHistory   & LP     & 41,551  & 358,574   & 12             & 1         \\
Test  & AmazonPhoto     & LP     & 48,362  & 500,928   & 12             & 1         \\
Test  & ChemBACE  & GC     & 34      & 74        & 1              & 1,513     \\
Test  & ChemHIV   & GC     & 25,51   & 54,95     & 1              & 41,127    \\
\hline
\end{tabular}
\end{small}
\end{sc}
\end{table*}
\section{Details on Experimental Settings}\label{sec:app-setup}
\subsection{Datasets}
GaRA and baselines are fine-tuned on five datasets: arXiv~\cite{hu_OpenGraphBenchmark_2020}, BookChildren~\cite{yan_ComprehensiveStudyTextattributed_2023}, AmazonComputer~\cite{shchur_PitfallsGraphNeural_2019}, FB15K237~\cite{liu_OneAllTraining_2023}, and ChEMBL~\cite{chen_TextspaceGraphFoundation_2024}, encompassing node classification, link prediction, and graph classification tasks. On arXiv, BookChildren, and AmazonComputer, both node classification and link prediction tasks are included. After fine-tuning, GaRA and baseline models are evaluated in a zero-shot setting on unseen datasets, including (Cora~\cite{wen_AugmentingLowResourceText_2023}, WikiCS~\cite{mernyei_WikiCSWikipediaBasedBenchmark_2022}, Reddit~\cite{li_GLBenchComprehensiveBenchmark_2024}, and Instagram~\cite{li_GLBenchComprehensiveBenchmark_2024}) for node classification, (AmazonPhoto and BookHistory)~\cite{yan_ComprehensiveStudyTextattributed_2023} for link prediction, and (ChemHIV and ChemBACE)~\cite{wu_MoleculeNetBenchmarkMolecular_2018} for graph classification. The statistics of these datasets are summarized in Tab.~\ref{tab:statistics}. For data splitting, we follow standard splits for node classification and graph classification from their original papers. For link prediction, we randomly split the data into train/valid/test sets with a ratio of 8:1:1. During training, a subset of instances is sampled from each training dataset to stay under the same experimental setup as previous work~\cite{wang_UniGTEUnifiedGraph_2025}. Specifically, 45,470 instances are sampled from arXiv, 21,888 from BookChildren, 31,378 from AmazonComputers for node classification tasks, 10,000 each from (arXiv, BookChildren, and AmazonComputers) and 29,440 from FB15K237 for link prediction tasks, and 74,242 from ChEMBL for graph classification. The task descriptions for each dataset are presented in Tab.~\ref{tab:description}.

\subsection{Experimental Setup}
GaRA follows UniGTE~\cite{wang_UniGTEUnifiedGraph_2025} to construct the input-injection framework and adds a graph-aware weight generation module to incorporate whole-graph information. Specifically, GaRA employs BERT~\cite{devlin_BERTPretrainingDeep_2019} as the pre-trained LM, and Vicuna-7B~\cite{zheng_JudgingLLMJudgeMTBench_2023} as the LLM backbone. During the model tuning process, the learning rate is set as $2e-4$ for LoRA and the graph-aware weight generator, while the rest of the module follows the same setting as in UniGTE. All experiments are conducted on NVIDIA A100.

\begin{figure*}[htb]
\centering
\includegraphics[width=0.6\textwidth]{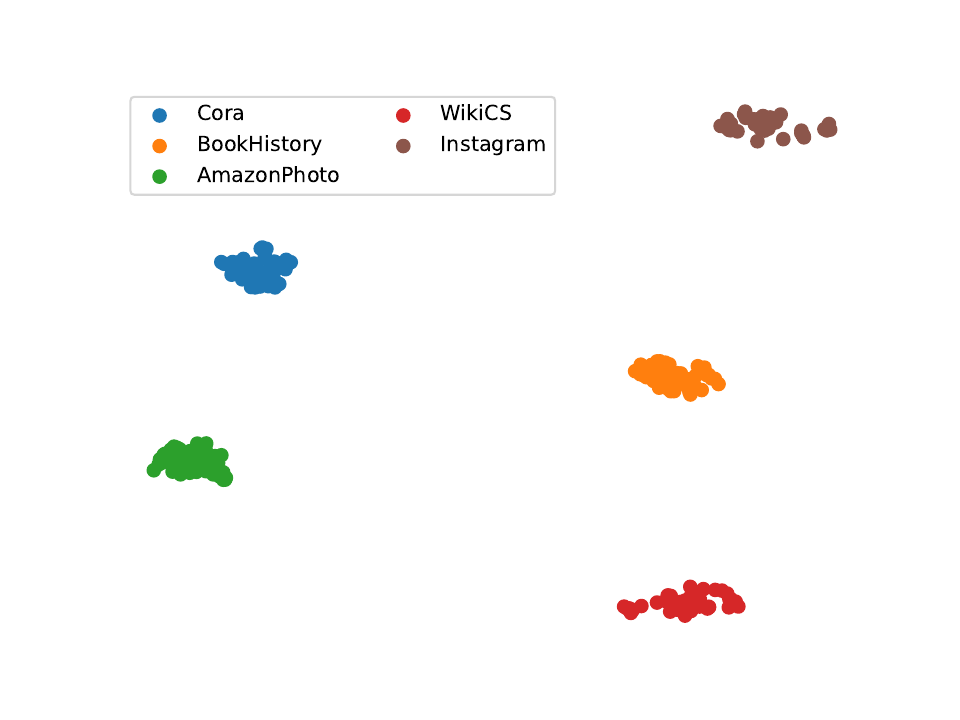}
\caption{\textbf{Distribution of the Generated Weight Updates.} The t-SNE results of the generated weight updates from different datasets are compared. For each dataset, 20\% of the node features are randomly masked to generate 50 perturbed views. Different colors indicate different datasets.}
\label{fig:tsne}
\end{figure*}
\subsection{Graph-awareness of GaRA}
GaRA encodes whole-graph information as weight updates for LLM adaptation. To verify whether the generated updates capture dataset-specific features, we conduct a graph-awareness analysis on the resulting low-rank matrices. Specifically, for each input graph, we randomly mask 20\% of the node features to generate 50 perturbed views of the same graph. The generated low-rank matrices $\mathbf{G}$ from Cora, BookHistory, AmazonPhoto, Instagram, and WikiCS are then visualized using t-SNE~\cite{t-SNE}. As shown in Fig.~\ref{fig:tsne}, different generating results $\mathbf{G}$ corresponding to the same dataset tend to cluster together, while results from different datasets form well-separated clusters. This observation suggests that GaRA encodes dataset-specific graph features, effectively empowering graph-aware adaptation of the LLM across diverse input graphs.

\begin{table*}
\caption{\textbf{Evaluation Results with More Backbones (measured by accuracy for node classification tasks: \%).} ``LP" denotes link prediction.}
\label{tab:abl-backbone}
\centering
\begin{sc}
\begin{small}
\begin{tabular}{lccccccc} 
\hline
                 & \multicolumn{5}{c}{Vicuna-7B}                                                       & \multicolumn{2}{c}{LLaMA3}           \\ 
\cmidrule(lr){2-6}\cmidrule(r){7-8}
                 & \multirow{2}{*}{In} & \multicolumn{4}{c}{GaRA}                                      & \multirow{2}{*}{In} & GaRA           \\
                 &                     & GCN           & GAT           & GIN                    & MLP  &                     & GCN            \\ 
\hline
AmazonPhoto (LP) & 50.3                & \textbf{52.9} & 52.4          & 52.5                   & 51.8 & 51.2                & \textbf{56.6}  \\
BookHistory (LP) & 48.3                & \textbf{59.1} & 58.9          & 58.4                   & 48.6 & 44.1                & \textbf{51.3}  \\
Instagram        & 58.5                & 60.7          & \textbf{64.7} & 63.8                   & 60.0 & 59.1                & \textbf{61.6}  \\
Cora             & 17.2                & 20.8          & 21.1          & \textbf{\textbf{21.8}} & 20.1 & 29.6                & \textbf{31.8}  \\
Reddit           & 50.6                & 51.3          & \textbf{52.8} & 52.3                   & 50.6 & 51.4                & \textbf{51.7}  \\
WikiCS           & 63.2                & 66.4          & \textbf{67.7} & 63.8                   & 63.5 & 65.1                & \textbf{57.1}  \\
\hline
\end{tabular}
\end{small}
\end{sc}
\end{table*}
\subsection{Evaluation with More Backbones}\label{ssec:app-backbone}
To evaluate the compatibility of GaRA with other backbones, GaRA is further implemented with MLP, GAT~\cite{kipf_SemiSupervisedClassificationGraph_2017}, and GIN~\cite{xu*_HowPowerfulAre_2019} for the GNN backbone. LLaMA3.1 8B-Instruct~\cite{llama31_IntroducingLlama31_2024} is adopted as the LLM alternative. Experimental setup follows the ``Partial" setting in Tab.~\ref{tab:abl-norm}. Results in Tab.~\ref{tab:abl-backbone} show that GaRA with different backbones consistently outperforms input-injection baselines, demonstrating its backbone compatibility. Notably, GaRA with different GNNs achieves better performance than the MLP variant. This indicates that the performance gains are from both graph representations and LoRA generation.

\begin{table*}
\caption{\textbf{Comparison with Different Strategies for Constraining Norm Amplification (measured by accuracy for node classification tasks: \%).} ``LP" denotes link prediction. ``No Constraint" denotes GaRA without norm rescaling. }
\label{tab:abl-norm-alter}
\centering
\begin{sc}
\begin{small}
\begin{tabular}{lcccccc} 
\hline
                 & \begin{tabular}[c]{@{}c@{}}No \\Constraint\end{tabular} & \begin{tabular}[c]{@{}c@{}}Norm \\Rescaling \\(GaRA)\end{tabular} & \begin{tabular}[c]{@{}c@{}}Normali\\-zation\end{tabular} & Gating & Penalty & Clipping  \\ 
\hline
AmazonPhoto (LP) & 52.7                                                    & 52.9                                                              & 52.1                                                     & 49.7   & 49.3    & 53.3      \\
BookHistory (LP) & 58.0                                                    & 59.1                                                              & 52.4                                                     & 53.4   & 52.7    & 60.6      \\
Instagram        & 56.9                                                    & 60.7                                                              & 60.1                                                     & 59.2   & 60.4    & 61.3      \\
Cora             & 20.3                                                    & 20.8                                                              & 11.6                                                     & 18.7   & 10.7    & 18.8      \\
Reddit           & 50.2                                                    & 51.3                                                              & 50.6                                                     & 51.0   & 50.7    & 50.2      \\
WikiCS           & 61.1                                                    & 66.4                                                              & 18.7                                                     & 53.7   & 57.5    & 63.3      \\
\hline
\end{tabular}
\end{small}
\end{sc}
\end{table*}
\subsection{Strategies for Constraining Norm Amplification}\label{ssec:app-abl-norm}
Except for norm rescaling, there are other potential strategies for constraining norm amplification on the generated low-rank weight update $\mathbf{G}$. Specifically, we consider loss penalty by adding a loss term (Penalty), normalization by applying Frobenius normalization on $\mathbf{G}$ (Norm), element clipping by clipping elements in $\mathbf{G}$ that are larger than the largest element in the left low-rank matrix $\mathbf{L}$ from LoRA (Clipping), and learnable gating by multiplying a learnable scalar with $\mathbf{G}$ (Gating). Experimental setup follows the "Partial" setting in Tab.~\ref{tab:abl-norm}. Tab.~\ref{tab:abl-norm-alter} shows that GaRA with norm rescaling consistently outperforms Gating, Penalty, and Norm, while being comparable with Clipping, which also constrains the norm with the LoRA component $\mathbf{L}$. Therefore, both norm rescaling and clipping preserve useful norm information while preventing excessive norm amplification. These results suggest that effective constraints control the norm value and preserve meaningful scale information. Our norm rescaling follows these principles, while Clipping serves as a potential alternative. We will explore this direction in future work.

\section{Limitation}\label{sec:app-limitation}
In this paper, we introduce a weight-level injection paradigm that incorporates whole-graph information into LLM adaptation via weight generation. This paradigm is designed as a complementary strategy to input-level injection, which primarily relies on subgraph sequences and discards higher-level information from the original graphs. However, the proposed weight-level paradigm has not yet been explored as a fully independent injection mechanism. In practice, the implemented model, GaRA, is built upon an input-level injection framework and leverages both input- and weight-level information. How to enable LLMs to perform graph tasks using only the weight-level injection paradigm, without relying on input-level prompts, remains an open problem and an interesting direction for future work.

\begin{table*}
\caption{\textbf{Dataset Description.} ``NC", ``LP", ``GC" denotes node classification, link prediction, and graph classification, respectively.}
\label{tab:description}
\centering
\begin{small}
\begin{tabularx}{\textwidth}{lrX} 
\toprule
Datasets                                      & Task Type & Description                                                                                                                                                                                                                                                                                                                                                             \\ 
\hline
\multirow{2}{*}{arXiv, Cora}                  & NC        & Given a representation of a paper with the following information: Title: \{title\}, Abstract: \{abstract\}. Question: Which arXiv CS sub-category does this paper belong to? Please directly give the most likely answer from the following sub-categories: \{candidate\_labels\}.                                                                                \\
                                              & LP        & Given the representation of two papers: Title: First Paper: \{title\}, Second Paper: \{title\}. Question: Do these two papers have citation relationships? Please choose the most likely answer from: "Yes, they have citation relationships" or "No, they do not have citation relationships".                                                                 \\
                                              \hline
\multirow{2}{*}{BookHistory, BookChildren}    & NC        & Given a representation of a book with the following information: Name: \{title\}, Content: \{abstract\}. Question: Which category does this book belong to? Please directly give the most likely answer from the following categories: \{candidate\_labels\}.                                                                                                     \\
                                              & LP        & Given the representation of two books: Title: First Book: \{title\}, Second Book: \{title\}. Question: Do these two books have co-purchased or co-viewed relationships? Choose from: "Yes, they have co-purchased or co-viewed relationships" or "No, they do not have co-purchased or co-viewed relationships".                                                \\
                                              \hline
\multirow{2}{*}{AmazonComputers, AmazonPhoto} & NC        & Given a representation of a book with the following information: Name: \{title\}, Content: \{abstract\}. Question: Which category does this book belong to? Please directly give the most likely answer from the following categories: \{candidate\_labels\}.                                                                                                     \\
                                              & LP        & Given the representation of two electronic products: Title: First Product: \{comment\}, Second Product: \{comment\}. Question: Do these two products have co-purchased or co-viewed relationships? Choose from: "Yes, they have co-purchased or co-viewed relationships" or "No, they do not have co-purchased or co-viewed relationships".                     \\
                                              \hline
WikiCS                                        & NC        & Given a representation of a Wikipedia page with the following information: Name: \{name\}, Content: \{content\}. Question: Which category does this Wikipedia page belong to? Please directly give the most likely answer from the following categories: \{candidate\_labels\}.                                                                                   \\
\hline
Reddit                                        & NC        & Given a representation of a user with the following information: Previous posts: \{posts\}. Question: Which category does this user belong to? Please directly give the most likely answer from the following categories: \{candidate\_labels\}.                                                                                                                  \\
\hline
Instagram                                     & NC        & Given a representation of a user with the following information: Personal introduction: \{introduction\}. Question: Which category does this user belong to? Please directly give the most likely answer from the following categories: \{candidate\_labels\}.                                                                                                    \\
\hline
FB15K237                                      & LP        & Given the representation of two entities: First entity: Name: \{name\}, Description: \{description\}. Second entity: Name: \{name\}, Description: \{description\}. Question: Which category should the relation between these two entities be classified as? Please directly give the most likely answer from the following categories: \{candidate\_labels\}.  \\
\hline
ChEMBL, ChemHIV, ChemBACE                     & GC        & Given a representation of a molecule with the following information: SMILES: \{smiles\}. Question: \{task\} Please answer: "Yes, this molecule is effective to this assay" or "No, this molecule is not effective to this assay".                                                                                                                                 \\
\hline
\multicolumn{2}{r}{NC}                                    & Determine this node’s most likely category within the network’s classification schema                                                                                                                                                                                                                                                                                 \\
\multicolumn{2}{r}{LP}                                    & Determine whether there is a specific relationship between these two nodes.                                                                                                                                                                                                                                                                                           \\
\multicolumn{2}{r}{GC}                                    & Determine whether the molecule possesses specific physicochemical or bioactivity properties.                                                                                                                                                                                                                                                                          \\
\bottomrule
\end{tabularx}
\end{small}
\end{table*}


\end{document}